\def\eqref#1{equation~\ref{#1}}
\def\1{\bm{1}}
\DeclareMathAlphabet{\mathsfit}{\encodingdefault}{\sfdefault}{m}{sl}
\SetMathAlphabet{\mathsfit}{bold}{\encodingdefault}{\sfdefault}{bx}{n}
\newcommand{\R}{\mathbb{R}}
\DeclareMathOperator*{\argmax}{arg\,max}
\DeclareMathOperator*{\argmin}{arg\,min}
\newcommand{\E}{\mathbb{E}}
\newcommand{\T}{\mathrm{T}}
\newcommand{\rank}{\operatorname{rank}}
\newcommand{\setN}{\mathcal{N}}
\newtheorem{thm}{Theorem}
\newtheorem{lem}{Lemma}
\newcommand{\diag}{\operatorname{diag}}
\newcommand{\Pcal}{\operatorname{\mathcal{P}}}
\icmltitlerunning{Invertible generative models for inverse problems: mitigating representation error and dataset bias}
\begin{document}

\twocolumn[

\icmltitle{Invertible generative models for  inverse problems: mitigating representation error and dataset bias}

\icmlsetsymbol{equal}{*}
\icmlsetsymbol{equal2}{$\dagger$}

\begin{icmlauthorlist}
\icmlauthor{Muhammad Asim}{equal,itu} % mark equal contribution as follows: \icmlauthor{Muhammad Asim}{equal,itu}
\icmlauthor{Mara Daniels}{equal,neumcs}
\icmlauthor{Oscar Leong}{rice}
\icmlauthor{Ali Ahmed}{equal2,itu}
\icmlauthor{Paul Hand}{equal2,neumcs}  
\end{icmlauthorlist}

\icmlaffiliation{itu}{Department of Electrical Engineering, Information Technology University, Lahore, Pakistan}
\icmlaffiliation{rice}{Department of Computational and Applied Mathematics, Rice University, Houston, TX}
\icmlaffiliation{neumcs}{Department of Mathematics and Khoury College of Computer Sciences, Northeastern University, Boston, MA}

\icmlcorrespondingauthor{Mara Daniels}{daniels.g@northeastern.edu}
\icmlkeywords{Inverse Problem, GAN, Invertible Neural Network}

\vskip 0.3in
]

\printAffiliationsAndNotice{\icmlEqualContribution}

\begin{abstract}
	Trained generative models have shown remarkable performance as priors for inverse problems in imaging -- for example, Generative Adversarial Network priors permit recovery of test images from 5-10x fewer measurements than sparsity priors.  Unfortunately, these models may be unable to represent any particular image because of architectural choices, mode collapse, and bias in the training dataset. In this paper, we demonstrate that invertible neural networks, which have zero representation error by design, can be effective natural signal priors at inverse problems such as denoising, compressive sensing, and inpainting.  Given a trained generative model, we study the empirical risk formulation of the desired inverse problem under a regularization that promotes high likelihood images, either directly by penalization or algorithmically by initialization. For compressive sensing, invertible priors can yield higher accuracy than sparsity priors across almost all undersampling ratios, and due to their lack of representation error, invertible priors can yield better reconstructions than GAN priors for images that have rare features of variation within the biased training set, including out-of-distribution natural images.  We additionally compare performance for compressive sensing to unlearned methods, such as the deep decoder, and we establish theoretical bounds on expected recovery error in the case of a linear invertible model.
\end{abstract}

\section{Introduction}

\begin{figure}[H]
	\centering
	\includegraphics[width=\linewidth]{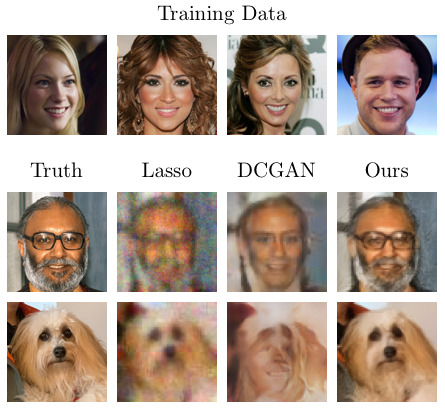}
	\caption{We train an invertible generative model with CelebA images (including those shown). When used as a prior for compressive sensing, it can yield higher quality image reconstructions than Lasso and a trained DCGAN, even on out-of-distribution images. Note that the DCGAN reflects biases of the training set by removing the man's glasses and beard, whereas our invertible prior does not.}
%	\vspace{-0.1in}\def\cftsubsecpresnum#1.#2.{}
	\label{fig:summary}
\end{figure}
Generative deep neural networks have shown remarkable performance as natural signal priors in imaging inverse problems, such as denoising, inpainting, compressive sensing, blind deconvolution, and phase retrieval.  These generative models can be trained from datasets consisting of images of particular natural signal classes, such as faces, fingerprints, MRIs, and more \citep{karras2017progressive, minaee2018finger, shin2018medical, chen2018efficient}.    Some such models, including variational autoencoders (VAEs) and generative adversarial networks (GANs), learn an explicit low-dimensional manifold that approximates a natural signal class \citep{GAN, kingma2013auto, rezende2014stochastic}.  We will refer to such models as GAN priors.   These priors can be used for inverse problems by attempting to find the signal in the range of the generative model that is most consistent with provided measurements.  When the GAN has a low dimensional latent space, this allows for a low dimensional optimization problem that operates directly on the natural signal class.   Consequently, generative priors can obtain significant performance improvements over classical methods.  For example, GAN priors have been shown to outperform sparsity priors at compressive sensing with 5-10x fewer measurements in some cases. Additionally, GAN priors have led to novel theory for signal recovery in the linear compressive sensing and nonlinear phase retrieval problems \citep{bora2017compressed, hand2017global,hand2018phase}, and they have also shown promising results for the nonlinear blind image deblurring problem \citep{asim2018blind}.

A significant drawback of GAN priors for solving inverse problems is that they can have large representation error or bias due to architecture and training.  That is, a desired image may not be in or near the range of a particular trained GAN.  Representation error can occur both for in-distribution and out-of-distribution images.  For in-distribution, it can be caused by inappropriate latent dimensionality and mode collapse.  For out-of-distribution images, representation error can be large in part because the GAN training process explicitly discourages such images due to the presence of the concurrently trained discriminator network.
For many imaging inverse problems, it is important to be able to recover signals that are out-of-distribution relative to training data.  For example, in scientific and medical imaging, novel objects or pathologies may be expressly sought.  Additionally, desired signals may be out-of-distribution because a training dataset has bias and is unrepresentative of the true underlying distribution.  As an example, the CelebA dataset \citep{liu2015deep} is biased toward people who are young, who do not have facial hair or glasses, and who have a light skin tone.  As we will see, a GAN prior trained on this dataset learns these biases and exhibits image recovery failures because of them. 

Several recent priors have been developed that have lower representation error than GANs.  One class of approaches are unlearned neural network priors, such as the Deep Image Prior and the Deep Decoder \citep{ulyanov2018deep,heckel2018deep}.  These are neural networks that are randomly initialized, and whose weights are optimized at inversion time to best fit provided measurements.  They have practically zero representation error for natural images.  Because they are untrained, there is no training set or training distribution, and hence there is no notion of in- or out-of-distribution natural images.  Another class of approaches include updating the weights of the trained GAN at inversion time in an image adaptive way, such as the IAGAN \cite{hussein2019imageadaptive}.  Such an approach could be interpreted as using the GAN as a warm start for a Deep Image Prior. A further approach is Latent Convolutional Models \cite{athar2018latent}, in which a generative prior is trained using high dimensional latent representations which are structured as the parameters of a randomly initialized convolutional neural network.

In this paper, we study flow-based invertible neural networks as signal priors.  These networks are mathematically invertible (one-to-one and onto) by architectural design \citep{dinh2016density, gomez2017reversible, jacobsen2018revnet, kingma2018glow}.  Consequently, they have zero representation error and are capable of recovering any image, including those significantly out-of-distribution relative to a training set; see Figure \ref{fig:summary}. We call the domain of an invertible generator the latent space, and we call the range of the generator the signal space.  These must have equal dimensionality. 
The strengths of these invertible models include: their architecture allows exact and efficient latent-variable inference, direct log-likelihood evaluation, and efficient image synthesis; they have the potential for significant memory savings in gradient computations; and they can be trained by directly optimizing the likelihood of training images.  
This paper emphasizes an additional strength: \emph{because they lack representation error, invertible models can mitigate dataset bias and improve recovery performance on inverse problems, including for signals that are out-of-distribution relative to training data.}

We present a method for using pretrained generative invertible neural networks as priors for imaging inverse problems.  An invertible generator, once trained, can be used for a wide variety of inverse problems, with no specific knowledge of those problems used during the training process.  As an invertible net permits a likelihood estimate for all images, image recovery can be posed as seeking the highest likelihood image that is consistent with provided measurements.

As a proxy for the image log-likelihood, we pose an optimization of squared data-fit over the latent space under regularization by likelihood of latent representations. In the case of denoising, we explicitly penalize log-likelihood of latent codes, while in compressive sensing and inpainting, regularization is achieved algorithmically. This is due in part to initializion with latent code zero. 

The contributions of this paper are as follows.  We train a generative invertible model using the CelebA dataset.  With this fixed model as a signal prior, we study its performance at multiple inverse problems.
\begin{itemize}
\item For image denoising, invertible neural network priors can yield sharper images with higher PSNRs than BM3D \cite{dabov2007bm3d}.
\item For compressive sensing of in-distribution images, invertible neural network priors can yield higher PSNRs than GANs with low-dimensional latent dimensionality (both DCGAN and PG-GAN), Image Adaptive GANs, sparsity priors, and a Deep Decoder across a wide range of undersampling ratios. 
\item Invertible neural networks exhibit graceful performance decay for compressive sensing on out-of-distribution images.  They can yield significantly higher PSNRs than GANs with low latent dimensionality across a wide range of undersampling ratios.  They can additionally yield higher or comparable PSNRs than the Deep Decoder when there are sufficiently many measurements.   
\item We introduce a likelihood-based theoretical analysis of compressive sensing under invertible generative priors in the case that the generator is linear.  Given $m$ linear measurements of an $n$-dimensional signal, we prove a theorem establishing upper and lower bounds of expected squared recovery error in terms of the sum of the squares of the smallest $n-m$ singular values of the model.
\end{itemize}

\section{Method} \label{sec:method}

We assume that we have access to a pretrained generative Invertible Neural Network (INN), $G: \R^n \to \R^n$. We write $x = G(z)$ and $z = G^{-1}(x)$, where $x \in \R^n$ is an image that corresponds to the latent representation $z \in \R^n$.  We will consider a $G$ that has the Glow architecture introduced in \cite{kingma2018glow}. For a short introduction to the Glow model, see section \ref{sec:glowdetails}.

We consider recovering an image $x$ from possibly-noisy linear measurements of the form,
\[
 y = A x + \eta,
\]
where $A \in \R^{m \times n}$ is a measurement matrix and $\eta \in \R^m$ models noise. 
 Given a pretrained invertible generator $G$, we have access to likelihood estimates for all  images $x\in\R^n$.  Hence, it is natural to attempt to solve the above inverse problem by a maximum likelihood formulation given by
\begin{align*}
\min_{x \in \R^n} \|Ax - y\|^2 - \gamma \log p_G(x),
\end{align*}
where $p_G$ is the likelihood function over $x$ induced by $G$, $\| \cdot \|$ is the Euclidean norm, and $\gamma$ is a hyperparameter.  We have found this formulation to be empirically challenging to optimize, and instead we solve an optimization problem over latent space that encourages high likelihood latent representations. In the case of denoising, we solve
\begin{align}
 \min_{z \in \R^n} \|A G(z) - y \|^2 + \gamma \|z\|^2 \label{eq:denoising-formulation}
\end{align}
In the case of compressive sensing, we fix $\gamma = 0$ and solve
\begin{align}
 \min_{z \in \R^n} \|A G(z) - y \|^2  \label{eq:cs-formulation}
\end{align}
Unless otherwise stated, we initialize both formulations at $z_0=0$. 

The motivation for formulations (\ref{eq:denoising-formulation}) and (\ref{eq:cs-formulation}) is as follows.  As a proxy for the likelihood of an image $x\in \R^n$, we will use the likelihood of its latent representation $z = G^{-1}(x)$.  Because the invertible network $G$ was trained to map a standard normal in $\R^n$ to a distribution over images, the log-likelihood of a latent representation $z$ is proportional to $\|z\|^2$. The model induces a probability distribution over the affine space of images consistent with some given measurements, and so our proxy turns the likelihood maximization task over an affine space in $x$ into the geometric task of finding the point on a manifold in $z$-space that is closest to the origin with respect to the Euclidean norm.   In order to approximate that point, we run a gradient descent in $z$ down the data misfit term starting at $z_0=0$.

In principle, this proxy is imperfect in that some high likelihood latent codes may correspond to low likelihood images. We find that the set of such images has low total probability and they are inconsistent with enough provided measurements. For further discussion of our choice of proxy and initialization at $z_0 = 0$, see the Supplemental Materials.

In all experiments that follow, we use an invertible Glow model,  as in \cite{kingma2018glow}. Due to computational considerations, we run most of our experiments on $64 \times 64$px color images with the pixel values scaled between $[0,1]$. For some compressive sensing experiments, we additionally trained a $128 \times 128$px Glow model in order to replicate results at this larger size. Once trained, the Glow prior is fixed for use in each of the inverse problems below.

For comparison to GAN architectures, we train a $64 \times 64$ DCGAN architecture \cite{radford2015unsupervised} and a $128 \times 128$ PGGAN architecture \cite{karras2017progressive}. To use these priors in inverse problems, we use the formulation from \cite{bora2017compressed}, which is the formulation above in the case where the optimization is performed over $\R^k$, $\gamma=0$, and initialization is selected randomly. For comparison to an unlearned neural image prior, we implement an overparameterized variant of a Deep Decoder prior at both resolutions as in \cite{heckel2018deep}. In all experiments, we solve (\ref{eq:cs-formulation}) using L-BFGS for Glow, and Adam \citep{kingma2014adam} for the Deep Decoder, DCGAN, and PGGAN architectures. DCGAN and PGGAN results are reported for an average of 3 runs because we observed some variance due to random initialization. To measure the quality of recovered images, we use Peak Signal-to-Noise Ratio (PSNR). For more information on the training algorithms, hyperparameters, and parameter counts for each of the tested models, see the Supplemental Materials.

\subsection{Details of the Glow Architecture} \label{sec:glowdetails}
The Glow architecture \cite{kingma2018glow} belongs to the class of \textit{normalizing flow models}. A normalizing flow models output signals using a composition of many flow steps which are each individually invertible. In the Glow model, flow steps use an Affine Coupling layer, in which half of the input data is used to determine the scale and translation parameters of an affine transformation applied to the other half of the input data. This operation is shown schematically in Figure \ref{fig:affine-coupling}. Each affine transformation is invertible and has an upper-triangular Jacobian, making it computationally tractable to compute the Jacobian determinant of the entire normalizing flow. In turn, given a simple prior over the latent space, the model can be efficiently trained to sample structured, high-dimensional data by directly maximizing likelihood of the training data.

To ensure that each input component can affect each output component, the Glow models incorporate a pixelwise reshuffling. In other normalizing flows, such as RealNVP \cite{dinh2016density}, this is achieved by a fixed permutation, whereas in Glow it is achieved by a learned 1x1 convolution. Both models consist of multiscale achitectures based on affine coupling and pixelwise reshuffling. We refer the reader to \cite{dinh2016density} and \cite{kingma2018glow} for more details.

\begin{figure}[t]
    \vspace{0.1in}
    \centering
       \includegraphics[width=\linewidth]{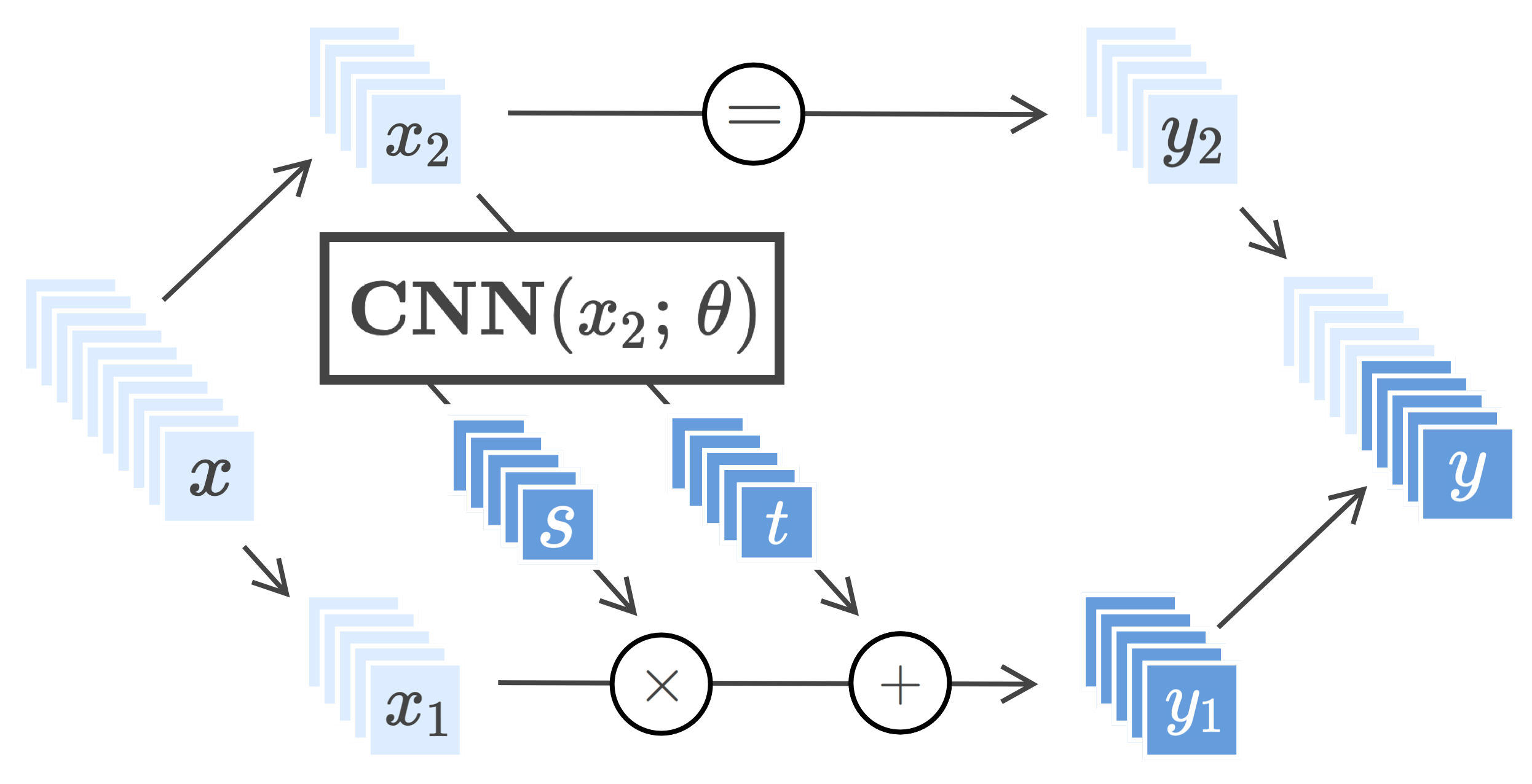}
    \caption{An Affine Coupling layer applies an affine transformation to half of the input data, here $x_1$. The parameters of the affine transformations, $s$ and $t$, can depend in a complex, learned way on the other half of the input data. The model can be inverted, even though $s$ and $t$ themselves are not invertible.}
    \label{fig:affine-coupling}
\end{figure}
\section{Applications}

\subsection{Denoising}
We consider the denoising problem with $A = I_n$ and $\eta \sim \mathcal{N}(0, \sigma^2 I_n)$, as given by formulation \eqref{eq:denoising-formulation}.  We evaluate the performance of a Glow prior, a DCGAN prior, and BM3D for two different noise levels on $64$px in-distribution images ($n = 64 \times 64 \times 3 = 12288$).

Figure \ref{fig:denosing-psnr-vs-gamma} shows the recovered PSNR values as a function of $\gamma$ for denoising by the Glow and DCGAN priors, along with the PSNR by BM3D.  The figure shows that the performance of the regularized Glow prior increases with $\gamma$, and then decreases. If $\gamma$ is too low, then the network fits to the noise in the image.  If $\gamma$ is too high, then data fit is not enforced strongly enough. We study this effect for an extensive range of $\gamma$ and noise levels, which may be found in the Supplemental Materials. We see in Figure \ref{fig:denosing-psnr-vs-gamma} that an appropriately regularized Glow prior can outperform BM3D by almost 2 dB.   The experiments also reveal that appropriately regularized Glow priors outperform the DCGAN prior, which suffers from representation error and is not aided by the regularization.
\begin{figure}[t]
    \centering
       \includegraphics[width=0.727\linewidth]{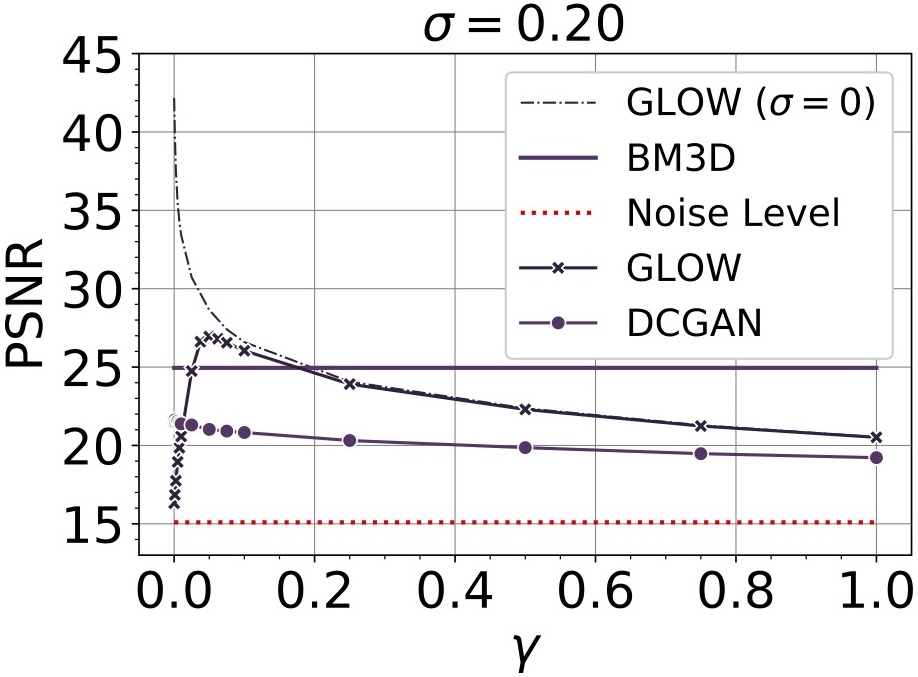}
    \caption{Recovered PSNR values as a function of $\gamma$ for denoising by the Glow and DCGAN priors.  Denoising results are averaged over $N=50$ in-distribution test set images. For reference, we show the average PSNRs of the original noisy images, after applying BM3D, and under the Glow prior in the noiseless case ($\sigma = 0$). }
    \vspace{-0.1in}
    \label{fig:denosing-psnr-vs-gamma}
\end{figure}
A visual comparison of the recoveries at the noise level $\sigma = 0.1$ using Glow, DCGAN priors, and BM3D can be seen in Figure \ref{fig:denoising-vs-gamma-visually}.  Note that the recoveries with Glow are sharper than BM3D. See the Supplemental Materials for more quantitative and qualitative results.

\begin{figure}[h]
    \centering
     \includegraphics[width=0.95\linewidth]{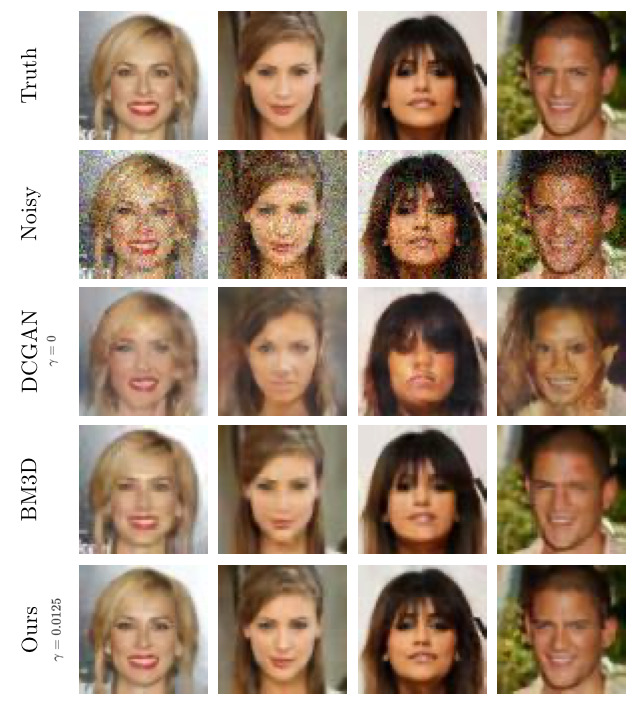}
     \vspace{-0.1in}
     \caption{Denoising results using the Glow prior, the DCGAN prior, and BM3D at noise level $\sigma = 0.1$. Note that the Glow prior gives a sharper image than BM3D in these cases.}
    \vspace{-0.15in}
    \label{fig:denoising-vs-gamma-visually}
\end{figure}

\subsection{Compressive Sensing}

In compressive sensing, one is given undersampled linear measurements of an image, and the goal is to recover the image from those measurements.  In our notation, $A \in \R^{m \times n}$ with $m < n$. 
As the image $x$ is undersampled, there is an affine space of images consistent with the measurements, and an algorithm must select which is most `natural.'  A common proxy for naturalness in the literature has been sparsity with respect to the DCT or wavelet bases.  With a GAN prior, an image is considered natural if it lies in or near the range of the GAN.  For an invertible prior, image likelihood is a proxy for naturalness, and under our proxy for likelihood, we consider an image to be natural if it has a latent representation of small norm.

\begin{figure*}[h]
    \centering
    \includegraphics[width=\textwidth]{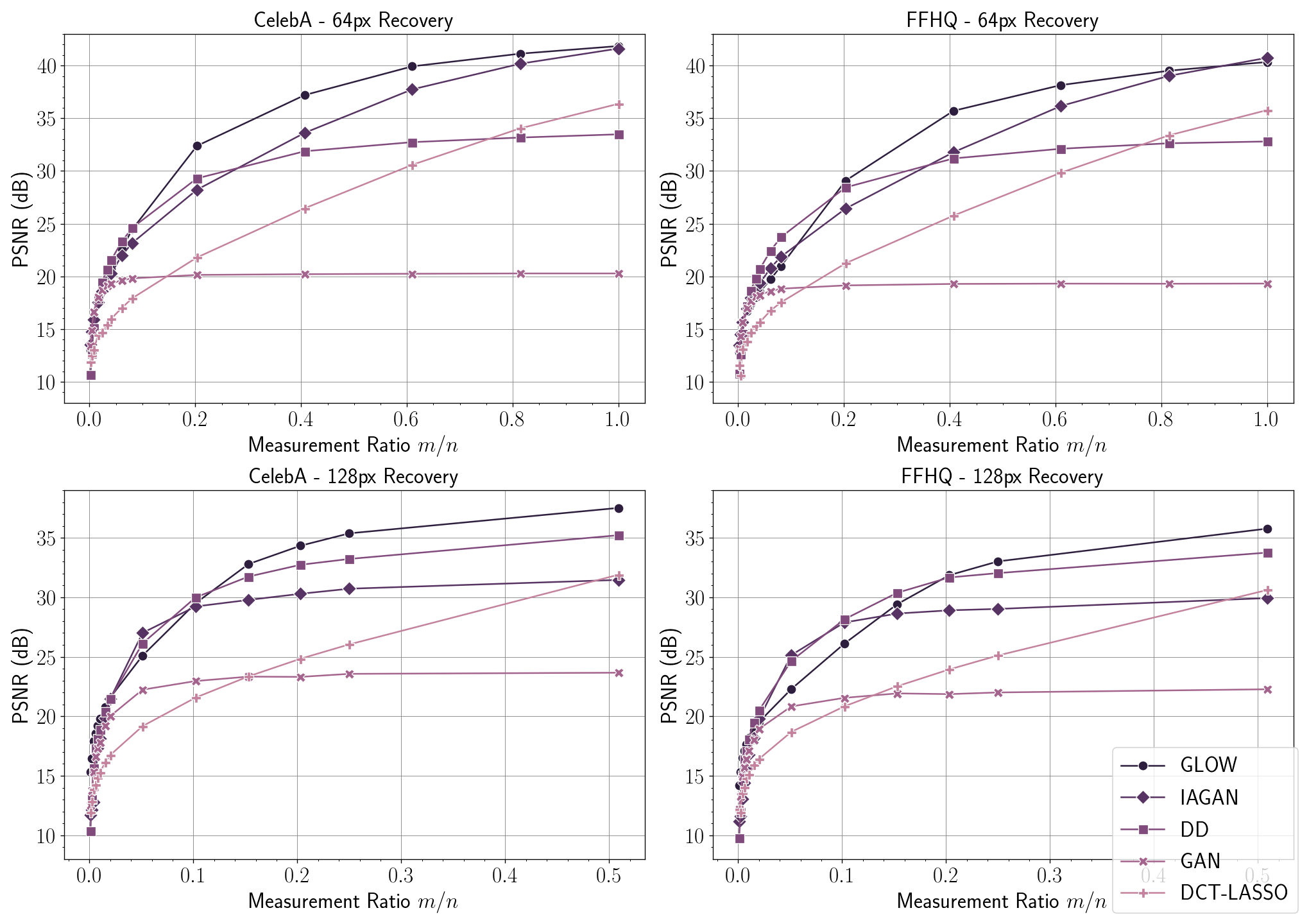}
    \caption{Performance of Glow, GAN, and IAGAN priors (learned) and the Deep Decoder and Lasso-DCT priors (unlearned) across various undersampling ratios in the $64$px and $128$px case. The $64$px and $128$px experiments use $N=1000$ and $N=100$ test set images respectively.} 
    \label{fig:psnrs-summary}
    \vspace{-0.1in}
\end{figure*}

We study compressive sensing in the case that $A$ is an $m \times n$ matrix of i.i.d.  $\setN(0,1/m)$ entries and where $\eta$ is standard i.i.d. Gaussian noise normalized such that $\sqrt{\E\|\eta\|^2}=0.1$. We present our results under formulation (\ref{eq:cs-formulation}), the $\gamma=0$ simplification of (\ref{eq:denoising-formulation}).

We compare the Glow prior to various other learned and unlearned image priors. This includes GANs, as used in \cite{bora2017compressed}, and IAGAN, as used in \cite{hussein2019imageadaptive}. In the $64$px case ($n = 64 \times 64 \times 3 = 12288$), we compare to a DCGAN, and in the $128$px case ($n = 128 \times 128 \times 3 = 49152$) we compare to a PGGAN, both trained on the CelebA-HQ dataset. We also compare to unlearned image priors, including an overparameterized Deep Decoder and a sparsity prior in the DCT basis\footnote{The inverse problems with Lasso were solved by $\min_z \|A \Phi z - y\|_2^2 + 0.01 \|z\|_1$ using coordinate descent. We observe similar performance between the DCT basis and a Wavelet basis. }. To assess the performance of each image prior, we report the mean PSNR of recovered test set images from both the training distribution of the learned models ("in-distribution" images) and other datasets ("out-of-distribution" images). Our in-distribution images are sampled from a test set of CelebA-HQ images which were withheld from all learned models during their training procedures. Out-of-distribution images are sampled randomly from the Flickr Faces High Quality dataset, which provides images with features of variation that are rare among CelebA images (eg. skin tone, age, beards, and glasses) \cite{karras2018stylebased}. The $64$px and $128$px recovery experiments have test sets with $N=1000$ and $N=100$ images respectively.

Surprisingly, the Glow prior exhibits superior performance on compressive sensing tasks with no likelihood penalization in the objective (\ref{eq:cs-formulation}). We find that $z_0 = 0$ is a particularly good choice of initialization, for which one does not benefit from likelihood penalization, while for other choices of initialization direct penalization of likelihood may improve performance. We provide additional experiments exploring this phenomena in the Supplemental Materials.

As shown in Figure \ref{fig:psnrs-summary}, we find that on its training distribution, the Glow prior outperforms both the learned and unlearned alternatives for a wide range of undersampling ratios. Surprisingly, in the case of extreme undersampling, Glow substantially outperforms these methods even though it does not maintain a direct low-dimensional parameterization of the signal manifold. In both the $64$px and $128$px cases, the GAN architectures quickly saturate due to their representation error. For out-of-distribution images, the Glow prior exhibits graceful performance decay, and is still highly performant in a large measurement regime. See figures \ref{fig:in-distribution} and \ref{fig:near-distribution} for a visual comparison of recovered images from the CelebA and FFHQ test sets for the $128$px case. The PGGAN's performance reveals biases of the underlying dataset and limitations of low-dimensional modeling, as the PGGAN fails completely to represent features like darker skin tones or accessories, which are uncommon in CelebA. In contrast, the Glow prior mitigates this bias, demonstrating image recovery for natural images that are not representative of the CelebA training set.
\begin{figure}[h]
    \centering
    \includegraphics[width=\linewidth]{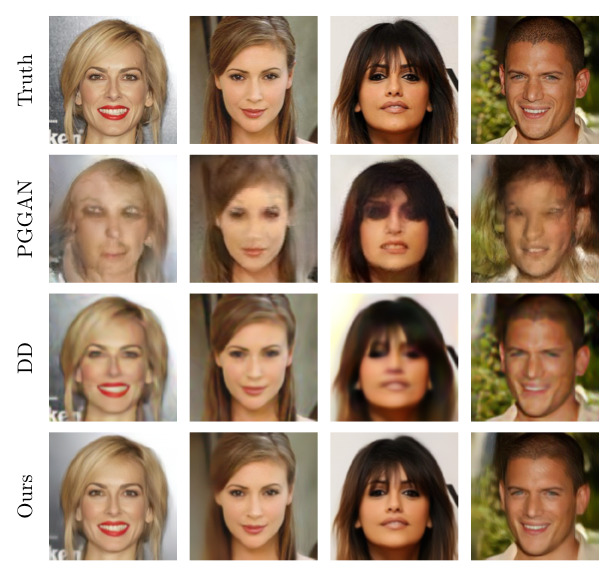}
    \caption{Compressive sensing on CelebA images with $m = 7,500$ ($\approx 20\%$) of measurements. Visual comparisons: CS under the Glow prior, PGGAN prior, and the overparameterized Deep Decoder prior. For images in-distribution, we observe qualitatively sharper recoveries from the Glow Prior than from the Deep Decoder.}
    \label{fig:in-distribution}
\end{figure}

\begin{figure}[h]
    \centering
    \includegraphics[width=\linewidth]{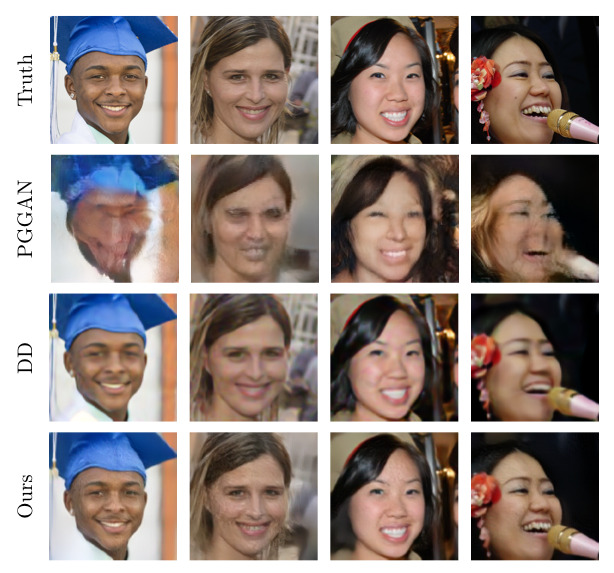}
    \caption{Compressive sensing on FFHQ images with $m = 7,500$ ($\approx 20\%$) of measurements. Visual comparisons: CS under the Glow prior, PGGAN prior, and the overparameterized Deep Decoder prior. For images out-of-distribution, the images recovered by the Deep Decoder and the Glow priors are both qualitatively and quantitatively (by PSNR) comparable.}
    \label{fig:near-distribution}
\end{figure}

\subsection{Inpainting}

In inpainting, one is given a masked image of the form $y = M \odot x$, where $M$ is a masking matrix with binary entries and $x \in \R^n$ is an $n$-pixel image. As in compressive sensing, we solve (\ref{eq:cs-formulation}) to try to recover $x$, among the affine space of images consistent with the measurements.  
\begin{figure}[h]
    \centering
    \includegraphics[width=\linewidth]{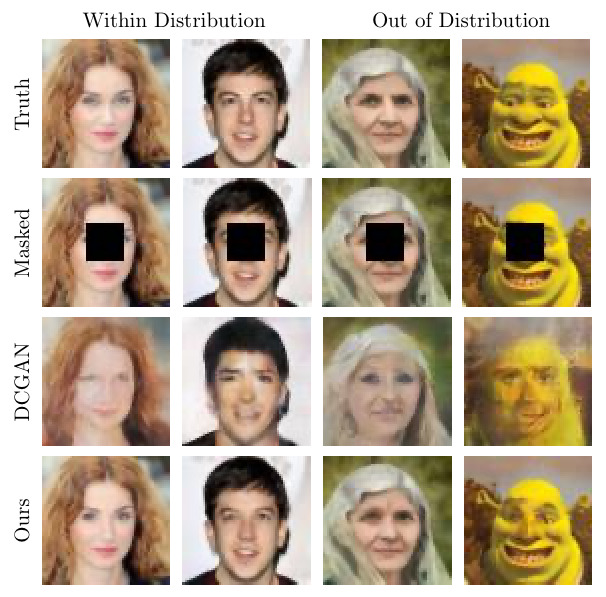}
    \caption{Inpainting on random images with the Glow and DCGAN priors. In the left columns, images are taken from the training distribution of the learned priors, and the right column includes two images from the wild.}
    \label{fig:inpainting}
\end{figure}
As before, using the minimizer $\hat{z}$, the estimated image is given by $G(\hat{z})$. We show qualitative inpainting results in Fig. \ref{fig:inpainting}. Our experiments reveal the same story as forcompressive sensing. If initialized at $z_0=0$, then Glow model under the empirical risk formulation with $\gamma=0$ exhibits high PSNRs on test images while the DCGAN is limited by its representation error.

\section{Theory}

We now introduce a likelihood-based theory for compressive sensing under invertible priors in the case of a linear invertible model.  We will provide an estimate on the expected error of the recovered signal in terms of the singular values of the model.  Specifically, we consider a fixed invertible linear  $G: \R^n \to \R^n$.  Assume signals are generated by a distribution $p_G(x)$, given by $x=G z$, where $z \sim \mathcal{N}(0, I_n)$.  Equivalently, $p_G = \mathcal{N}(0, GG^{\T})$. 

For an unknown sample $x_0\sim p_G$, suppose we are provided noiseless measurements $A x_0$, where $A \in \R^{m \times n}$ has i.i.d. $\mathcal{N}(0,1)$ entries.  We consider the maximum likelihood estimate of $x_0$: 
\begin{align}
    \hat{x} := \argmax_{x \in \R^n}\ p_G(x)\ \text{s.t.}\ Ax = Ax_0. \label{orig_max_likelihood_opt}
\end{align}
The following theorem provides both upper and lower bounds on the absolute expected squared error in terms of the singular values of $G$.

\begin{thm} \label{main_thm}
Suppose $x_0 \sim p_G$ where $ p_G = \mathcal{N}(0,GG^{\T})$ and $G \in \R^{n \times n}$ has singular values $\sigma_1 \geqslant \sigma_2 \geqslant \dots \geqslant \sigma_n > 0$. Let $A \in \R^{m \times n}$ have i.i.d. $\mathcal{N}(0,1)$ entries where $4 \leqslant m < n$. Then the maximum likelihood estimator $\hat{x}$  obeys \begin{align}
    \sum_{i > m} \sigma_i^2 \leqslant \E_A\E_{x_0 \sim p_G}\|\hat{x}-x_0\|^2 \leqslant m \sum_{i > m-2}\sigma_i^2. \label{main_thm_bound}
\end{align}
\end{thm}
The relative expected squared error could be computed by $ \frac{\E_A\E_{x_0 \sim p_G}\|\hat{x}-x_0\|^2}{\E_{x_0 \sim p_G}\|x_0\|^2}$, noting that $\E_{x_0 \sim p_G}\|x_0\|^2 = \sum_{k \geq 1} \sigma_k^2$.

The lower bound of this theorem establishes that under a linear invertible generative model, $m$ Gaussian measurements give rise to at least as much error as would be given by the best $m$-dimensional signal model, i.e. the model corresponding to the only the top $m$ directions of highest variance.  The upper bound on this theorem establishes that up to a factor of $m$, expected square error is bounded above by the error given by the best $m-2$ dimensional model.  

Note that if the singular values decay quickly enough, then the expected recovery error under the linear invertible model decreases to a small value as $m \to n$.  Specifically, this is achieved if $\sigma_i = o(i^{-1/2})$.  
This conclusion is in contrast with the theory for GANs with low latent dimensionality.  In that literature, recovery error does not decrease to 0 as $m \to n$; instead it saturates at the representation error of the GAN model.  In the present case of a linear model, the best $k$-dimensional model would have expected square recovery error at least $\sum_{i > k} \sigma_i^2$ regardless of $m$. Similar work in \cite{yu2011sensinggmms} also showed that $m$ Gaussian measurements of a Gaussian signal give rise to an error proportional to the best $m$-term approximation. The difference between that work and our results is that we have an explicit upfront constant in the upper bound whereas in \cite{yu2011sensinggmms} it is estimated via Monte Carlo simulations.

The theorem is proved in the Supplemental Materials.  

\section{Discussion} 
We have demonstrated that pretrained generative invertible models can be used as natural signal priors in imaging inverse problems.  Their strength is that \emph{every desired image is in the range of an invertible model}, and the challenge that they overcome is that \emph{every undesired image is also in the range of the model and no explicit low-dimensional representation is kept}.  
We demonstrate that this formulation can quantitatively and qualitatively outperform BM3D at denoising.  For compressive sensing on in-distribution images, invertible priors can have lower recovery errors than Deep Decoder, GANs with low dimensional latent representations, and Lasso, across a wide range of undersampling ratios.  We show that the performance of our invertible prior behaves gracefully with slight performance drops for out-of-distribution images.  We additionally prove a theoretical upper and lower bound for expected squared recover error in the case of a linear invertible generative model.  

The idea of analyzing inverse problems with invertible neural networks has appeared in \citet{ardizzone2018analyzing}.  The authors study estimation of the complete posterior parameter distribution under a forward process, conditioned on observed measurements.  Specifically, the authors approximate a particular forward process by training an invertible neural network. The inverse map is then directly available.  In order to cope with information loss, the authors augment the measurements with additional variables.  This work differs from ours because it involves training a separate model for every particular inverse problem.  

In contrast, our work studies how to use a pretrained invertible generator for a variety of inverse problems not known at training time.  Training invertible networks is challenging and computationally expensive; hence, it is desirable to separate the training of off-the-shelf invertible models from potential applications in a  variety of scientific domains. In additional work by \citet{putzky2019invert}, the authors also exploit the efficient gradient calculations of invertible nets for improved MR reconstruction. 

\textit{Why do invertible neural networks perform well for both in-distribution and out-of-distribution images?} \\
One reason that the invertible prior performs so well is because it has no representation error. Any image is potentially recoverable, even if the image is significantly outside of the training distribution.  In contrast, methods based on projecting onto an explicit low-dimensional representation of a natural signal manifold (such as GAN priors) will have representation error, perhaps due to modeling assumptions, mode collapse, or bias in a training set.  Such methods will see performance prematurely saturate as the number of measurements increases.  In contrast, an invertible prior would not see performance saturate.  In the extreme case of having a full set of exact measurements, an invertible prior could in principle recover any image exactly.

\textit{How do invertible priors respect the low dimensionality of natural signals?  And how does our theory inform this?} \\
A surprising feature of invertible priors is that they perform well even though they do not maintain explicit low-dimensional representations of natural signals.  Instead they have a fully dimensional representation of the natural signal class.  Naturally, any signal class will truly be fully dimensional, for example due to sensor noise, but with different importances to different directions.  Those directions in latent space corresponding to noise perturbations will have much smaller of an effect than corresponding perturbations in semantically meaningful directions.    As an illustration, we observe with trained Glow models that the singular values of the Jacobian of $G$ at a natural image exhibit significant decay, as we show in the Supplemental Materials.  In principle, signal models of any given dimensionality could be extracted from $G$, though it is not obvious how to compute these.  \textit{The power of invertible priors is that each additional measurement acts to roughly increment the dimensionality of the modeled natural signal manifold.  That is, more measurements permit exploiting a higher dimensional and, hence, lower-error signal model.  In contrast, GAN-based recovery theory exploits a model of fixed dimensionality regardless of the number of measurements}.  Our theoretical analysis provides justification for this explanation based on linear invertible generators $G$.  Each singular value $\sigma_i$ of $G$ quantifies the signal variation due to a particular direction in latent space, and the expected squared error given $m$ random measurements is upper bounded by $m \sum_{i>m-2} \sigma_i^2$.  Note that the best $m$-dimensional manifold (given by the top $m$ singular values and vectors) would yield at best an expected squared error of $\sum_{i>m}\sigma_i^2$.  The multiplicative $m$ term and the sum's starting index may not be optimal, and improvement of this bound is left for future research.

\begin{figure}[t]
    \centering
    \vspace{0.5cm}
    \includegraphics[width=0.45\linewidth]{./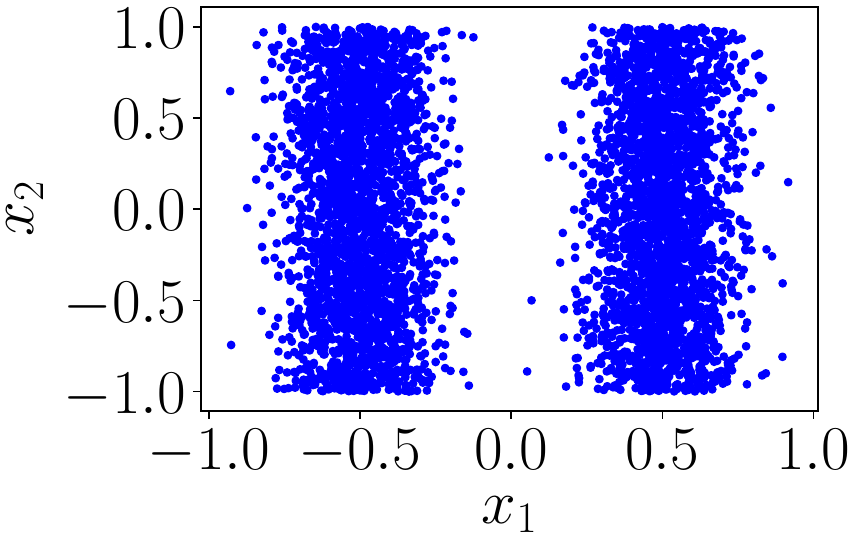} \\
    \includegraphics[width=0.48\linewidth]{./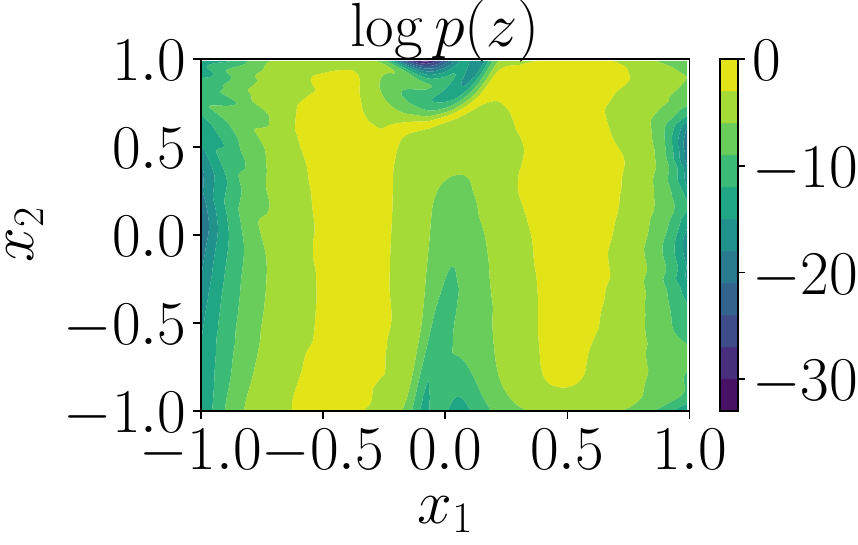} 
    \includegraphics[width=0.48\linewidth]{./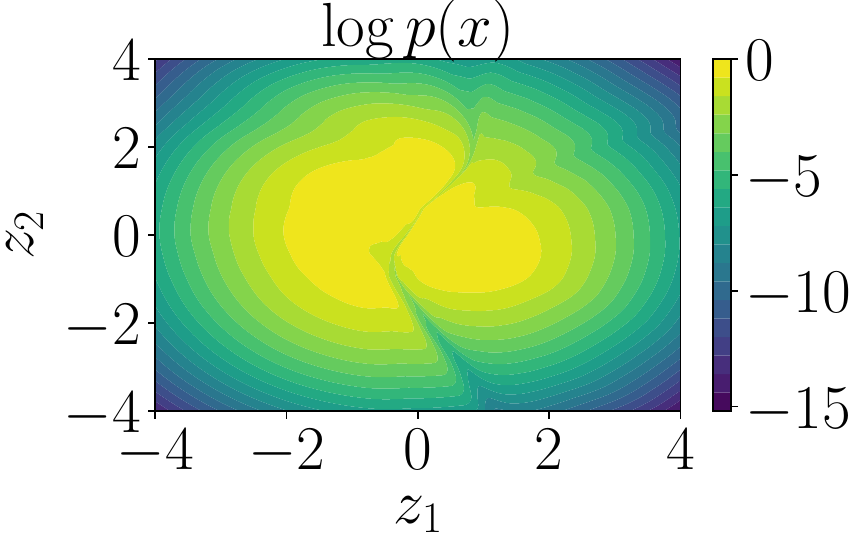}
    \caption{An invertible net was trained on the data points in $x$-space (top), resulting in the given plots of latent $z$-likelihood versus $x$ (bottom left), and $x$-likelihood versus latent representation $z$ (bottom right). }
    \label{fig:2D-example}
    \end{figure}

\textit{Why is the likelihood of an image's latent representation a reasonable proxy for the image's likelihood?} \\
The training process for an invertible generative model attempts to learn a target distribution in image space by directly maximizing the likelihood of provided samples from that distribution, given a standard Gaussian prior in latent space.  High probability regions in latent space map to regions in image space of equal probability.  Hence, broadly speaking, regions of small values of $\|z\|$ are expected to map to regions of large likelihoods in image space.  There will be exceptions to this property.  For example, natural image distributions have a multimodal character.  The preimage of high probability modes in image space will correspond to high likelihood regions in latent space.  Because the generator $G$ is invertible and continuous, interpolation in latent space of these modes will provide images of high likelihood in $z$ but low likelihood in the target distribution.  To demonstrate this, we trained a Real-NVP \citep{dinh2016density} invertible neural network on the two dimensional set of points depicted in Figure \ref{fig:2D-example} (top panel).  The lower left and right panels show that high likelihood regions in latent space generally correspond to higher likelihood regions in image space, but that there are some regions of high likelihood in latent space that map to points of low likelihood in image space and in the target distribution.  We see that the spurious regions are of low total probability and would be unlikely to be the desired outcomes of an inverse problem arising from the target distribution.  

\textit{How can solving compressive inverse problems be successful without direct penalization of the image likelihood or its proxy?} \\
If there are fewer linear measurements than the dimensionality of the desired signal, an affine space of images is consistent with the measurements.  In our formulation, regularization does not occur by direct penalization by our proxy for image likelihood;  instead, it occurs implicitly by performing the optimization in $z$-space with an initialization of $z_0=0$.  The set of latent representations $z$ that are consistent with the compressive measurements define a $m$-dimensional nonlinear manifold.  As per the likelihood proxy mentioned above, the spirit of our formulation is to find the point on this manifold that is closest to the origin with respect to the Euclidean norm.  Our specific way of estimating this point is to perform a gradient descent down a data misfit term in $z$-space, starting at the origin.  While a gradient flow typically will not find the closest point on the manifold, it empirically finds a reasonable approximation of that point. This type of algorithmic regularization is akin to the linear invertible model setting where it is well known that running gradient descent to optimize an underdetermined least squares problem with initialization $z_0=0$ will converge to the minimum norm solution.

The results of this paper provide further evidence that reducing representational error of generators can significantly enhance the performance of generative models for inverse problems in imaging.  This idea was also recently explored in \cite{athar2018latent}, where the authors trained a GAN-like prior with a high-dimensional latent space.  The high dimensionality of this space lowers representational error, though it is not zero.  In their work, the high-dimensional latent space had a structure that was difficult to directly optimize, so the authors successfully modeled latent representations as the output of an untrained convolutional neural network whose parameters are estimated at test time.  

Their paper and ours raises an important question: which generator architectures provide a good balance between low representation error, ease of training, and ease of inversion? This question is important, as solving (\ref{eq:cs-formulation}) in our $128\times 128$px color images experiments took on the order of 11 minutes using an NVIDIA 2080 Ti GPU.

 New developments are needed on architectures and frameworks in between low-dimensional generative priors and fully invertible generative priors.
  Such methods could leverage the strengths of invertible models while being much cheaper to train and use. 

 \section*{Acknowledgements}
  PH is partially supported by NSF CAREER Grant DMS-1848087. OL acknowledges support by the NSF Graduate Research Fellowship under Grant No. DGE-1450681.

\bibliography{GlowIP}
\bibliographystyle{icml2020} %icml2020

\onecolumn

%\icmltitle{Invertible generative models for inverse problems: mitigating representation error and dataset bias}

\section{Appendix}

\subsection{Proofs}\label{sec:proofs}

We now proceed with the proof of Theorem \ref{main_thm}. Throughout the proof, we use $\stackrel{d}{=}$ to denote equality in distribution.

\begin{proof}[Proof of Theorem \ref{main_thm}] 

We first quantify the expected error over the distribution $p_G$. Observe that since $x_0 \sim p_G = \mathcal{N}(0,GG^{\T})$, it can be written as $x_0 = Gz_0$ where $z_0 \sim \mathcal{N}(0,I_n)$. Then \begin{align*}
    \E_{x_0 \sim p_G} \|\hat{x} - x_0\|^2 = \E_{z_0 \sim \mathcal{N}(0,I_n)} \|G\hat{z} - Gz_0\|^2
\end{align*} where $\hat{z}$ is the maximum likelihood estimator in latent space: \begin{align*}
    \hat{z} := \argmin_{z \in \R^n}\ \frac{1}{2}\|z\|^2\ \text{s.t.}\ AGz = AGz_0.
\end{align*} Since $A$ has i.i.d. $\mathcal{N}(0,1)$ entries, $AG$ has full rank with probability 1 so $\hat{z}$ is given explicitly by\begin{align*}
    \hat{z} = G^{\T}A^{\T}(AGG^{\T}A^{\T})^{-1}AGz_0 =: \Pcal_{G^{\T}A^{\T}} z_0
\end{align*} where $\Pcal_{G^{\T}A^{\T}}$ is the orthogonal projection onto the range of $G^{\T}A^{\T}$. Thus \begin{align*}
     \E_{z_0 \sim \mathcal{N}(0,I_n)} \|G\hat{z} - Gz_0\|^2 = \E_{z_0 \sim \mathcal{N}(0,I_n)}\|G\Pcal_{G^{\T}A^{\T}} z_0 - Gz_0\|^2 = \|G(\Pcal_{G^{\T}A^{\T}}-I_n)\|_F^2
\end{align*} where the last equality follows from Lemma \ref{expected_matrix_times_gaussian}.

Without loss of generality, it suffices to consider the case when $G$ is diagonal. To see this, let $U\Sigma V^{\T}$ be the SVD of $G$. Observe that $$\Pcal_{G^{\T}A^{\T}} = \Pcal_{V\Sigma U^{\T}A^{\T}} = V\Pcal_{\Sigma U^{\T}A^{\T}} V^{\T} \stackrel{d}{=} V\Pcal_{\Sigma A^{\T}} V^{\T}$$ where we used Lemma \ref{proj_commutativity_lemma} in the second equality and used the rotational invariance of $A$ in the last equality. Thus we have \begin{align*}
    \|G(\Pcal_{G^{\T}A^{\T}}-I_n)\|_F^2 \stackrel{d}{=} \|G(V\Pcal_{\Sigma A^{\T}} V^{\T}-I_n)\|_F^2.
\end{align*} Moreover, note that \begin{align*}
    \|G(V\Pcal_{\Sigma A^{\T}} V^{\T}-I_n)\|_F^2  = \|\Sigma V^{\T} (V\Pcal_{\Sigma A^{\T}}V^{\T} - I_n)\|_F^2 = \|\Sigma \Pcal_{\Sigma A^{\T}}V^{\T} - \Sigma V^{\T}\|_F^2 = \|\Sigma (\Pcal_{\Sigma A^{\T}}-I_n)\|_F^2
\end{align*} where we used the unitary invariance of the Frobenius norm in the first and last equality and the orthogonality of $V$ in the second equality. Hence $$\E_A \|G(\Pcal_{G^{\T}A^{\T}}-I_n)\|_F^2 = \E_A\|\Sigma (\Pcal_{\Sigma A^{\T}}-I_n)\|_F^2 = \E_A\|(I_n-\Pcal_{\Sigma A^{\T}})\Sigma\|_F^2$$ so we assume without loss of generality that $G = \diag(\sigma_1,\dots,\sigma_n)$ and consider $\E_A\|(I_n-\Pcal_{GA^{\T}})G\|_F^2$.

We now compute the lower bound of the expected error. Note that $\rank(\Pcal_{GA^{\T}}) = m$ since $A$ is full rank with probability $1$. Also for any random draw of Gaussian $A$, \begin{align}
     \min_{S \in \R^{n\times n},\ \rank(S) \leqslant m} \|G-S\|_F^2 \leqslant \|(I_n-\Pcal_{GA^{\T}})G\|_F^2. \label{best_m_rank_approx_low_bound}
 \end{align} By the Eckart-Young Theorem \cite{EckartYoung},
 \begin{align*}
     \min_{S \in \R^{n\times n},\ \rank(S) \leqslant m} \|G-S\|_F^2 = \sum_{i > m} \sigma_i^2.
 \end{align*} Taking the expectation with respect to $A$ to the right hand-side of (\ref{best_m_rank_approx_low_bound}) establishes the lower bound in the theorem.

 The upper bound comes from the following result in \cite{Troppetal2011}. \begin{thm}[Minor variant of Theorem 10.5 in \cite{Troppetal2011}] \label{tropp_et_al_thm}
Suppose $G$ is a real $\ell \times n$ matrix with singular values $\sigma_1 \geqslant \sigma_2 \geqslant \dots \geqslant \sigma_{\min\{\ell,n\}} \geqslant 0$. Choose a target rank $k \geqslant 2$ and oversampling factor $m - k \geqslant 2$ where $m \leqslant \min\{\ell, n\}$. Draw an $n \times m$ Gaussian matrix $A^{\T}$ and construct the sample matrix $GA^{\T}$. Then the expected approximation error \begin{align}
    \E_A\|(I_n-\Pcal_{GA^{\T}})G\|_F^2 \leqslant \left(1 + \frac{k}{m-k-1}\right)\sum_{j > k}\sigma_j^2. \label{tropp_thm_variant_bound}
\end{align}

\end{thm}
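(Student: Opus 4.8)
The plan is to follow the argument behind Theorem 10.5 of \cite{Troppetal2011}: the stated squared-error bound is precisely the intermediate estimate obtained just before Hölder's inequality is applied in that proof, so I would reconstruct that intermediate step directly. Write $\Omega := A^{\T} \in \R^{n\times m}$ and $Y := G\Omega = GA^{\T}$, so $\Pcal_{GA^{\T}} = \Pcal_Y$. Let $G = U\Sigma V^{\T}$ be the SVD and split the singular values at the target rank $k$: write $\Sigma = \diag(\Sigma_1,\Sigma_2)$ with $\Sigma_1 \in \R^{k\times k}$ carrying $\sigma_1,\dots,\sigma_k$ and $\Sigma_2$ carrying the tail, and partition $V = [\,V_1 \mid V_2\,]$ conformally. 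The first device is the orthogonal invariance of $\Omega$: because the columns of $V$ are orthonormal, the blocks $\Omega_1 := V_1^{\T}\Omega$ and $\Omega_2 := V_2^{\T}\Omega$ are \emph{independent} standard Gaussian matrices of sizes $k\times m$ and $(n-k)\times m$. Since $\Pcal_Y$ is unaffected by left-multiplying $G$ by the orthogonal $U$ and the Frobenius norm is unitarily invariant, the problem reduces to controlling $\|(I_n-\Pcal_Y)G\|_F^2$ through $\Sigma_1,\Sigma_2,\Omega_1,\Omega_2$ alone.

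The second device is the deterministic structural bound (the Frobenius-norm case of Theorem 9.1 in \cite{Troppetal2011}): whenever $\Omega_1$ has full row rank,
\begin{align*}
\|(I_n-\Pcal_Y)G\|_F^2 \leqslant \|\Sigma_2\|_F^2 + \|\Sigma_2\,\Omega_2\,\Omega_1^{\dagger}\|_F^2,
\end{align*}
where $\Omega_1^{\dagger}$ is the Moore–Penrose pseudoinverse. The route to this is to replace $\Pcal_Y$ by the projector onto the smaller subspace $\mathrm{range}(G\Omega\,\Omega_1^{\dagger}) \subseteq \mathrm{range}(Y)$; since enlarging the range can only decrease the residual, $\|(I_n-\Pcal_Y)G\|_F \leqslant \|(I_n-\Pcal_{G\Omega\Omega_1^{\dagger}})G\|_F$, and using $\Omega_1\Omega_1^{\dagger}=I_k$ the right-hand residual evaluates by a direct block computation to the stated expression. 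Taking $\E_A$ of the inequality and conditioning on $\Omega_1$, the cross term is handled by the identity $\E_{\Omega_2}\|S\,\Omega_2\,T\|_F^2 = \|S\|_F^2\,\|T\|_F^2$ for fixed $S,T$ and standard Gaussian $\Omega_2$ (an entrywise computation using independence and unit variance). This collapses the expectation to
\begin{align*}
\E_A\|(I_n-\Pcal_Y)G\|_F^2 \leqslant \|\Sigma_2\|_F^2 + \|\Sigma_2\|_F^2\,\E\|\Omega_1^{\dagger}\|_F^2.
\end{align*}

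It then remains to evaluate $\E\|\Omega_1^{\dagger}\|_F^2$ for the $k\times m$ standard Gaussian $\Omega_1$. Here $\|\Omega_1^{\dagger}\|_F^2 = \mathrm{tr}\big[(\Omega_1\Omega_1^{\T})^{-1}\big]$, and $\Omega_1\Omega_1^{\T}$ is a $k\times k$ Wishart matrix with $m$ degrees of freedom, whose inverse has the known first moment $\E(\Omega_1\Omega_1^{\T})^{-1} = \frac{1}{m-k-1}I_k$; taking the trace gives $\E\|\Omega_1^{\dagger}\|_F^2 = \frac{k}{m-k-1}$. This is exactly where the hypotheses enter: we use $m \leqslant n$ together with $m \geqslant k+2$ so that $\Omega_1$ has full row rank almost surely, and the oversampling condition $m-k \geqslant 2$ to guarantee the inverse-Wishart mean is finite. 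Substituting and recalling $\|\Sigma_2\|_F^2 = \sum_{j>k}\sigma_j^2$ yields the claimed factor $1 + \frac{k}{m-k-1}$.

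I expect the main obstacle to be the deterministic bound: the remaining ingredients (orthogonal invariance and the two Gaussian moment computations) are essentially mechanical, whereas verifying $\|(I_n-\Pcal_Y)G\|_F^2 \leqslant \|\Sigma_2\|_F^2 + \|\Sigma_2\Omega_2\Omega_1^{\dagger}\|_F^2$ requires carefully choosing the comparison subspace $\mathrm{range}(G\Omega\,\Omega_1^{\dagger})$ and tracking how the projection interacts with the block structure of $\Sigma$ and $V^{\T}\Omega$. Since this is precisely Theorem 9.1 of \cite{Troppetal2011}, I would cite it rather than reprove it if a citation is acceptable; otherwise the crux is the identity $\Omega_1\Omega_1^{\dagger}=I_k$, which makes the top block of the flattened sample matrix invertible on its row space and drives the computation.
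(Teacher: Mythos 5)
Your proposal is correct and takes essentially the same route as the paper: the paper's justification is simply to observe that the squared-Frobenius bound is the intermediate estimate obtained in the proof of Theorem 10.5 of \cite{Troppetal2011} before H\"older's inequality is applied, and to cite it. You reconstruct exactly that argument --- the split of $\Sigma$ and $V$ at rank $k$, independence of $\Omega_1$ and $\Omega_2$, the deterministic bound of their Theorem 9.1, the identity $\E\|S\Omega_2 T\|_F^2 = \|S\|_F^2\|T\|_F^2$, and the inverse-Wishart computation $\E\|\Omega_1^{\dagger}\|_F^2 = k/(m-k-1)$ --- so the content matches; no gaps.
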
 \noindent Theorem 10.5 in \cite{Troppetal2011} literally states a bound on $\E_A \|(I_n-\Pcal_{GA^{\T}})G\|_F$ but in the proof, the authors show the stronger result (\ref{tropp_thm_variant_bound}). The upper bound in Theorem \ref{main_thm} follows by setting $\ell = n$ and $k = m-2$ in Theorem \ref{tropp_et_al_thm} whereby the condition on $k$ requires $4 \leqslant m \leqslant n$.

\end{proof}

The following are two supplementary lemmas used in the proof of Theorem \ref{main_thm}. The first shows that the expected value of the $\ell_2$ norm of a matrix acting on an isotropic gaussian vector is its frobenius norm.

\begin{lem} \label{expected_matrix_times_gaussian}
Let $M \in \R^{m \times n}$. Then $\E_{z \sim \mathcal{N}(0,I_n)} \|Mz\|^2 = \|M\|_F^2$.
\end{lem}

\begin{proof} Let $U \Sigma V^{\T}$ be the SVD of $M$ where $\Sigma \in \R^{m \times n}$ contains the singular values $\sigma_i$ of $M$ on the diagonal for $i = 1,\dots,r$ and $r = \rank(M) \leqslant \min\{m,n\}$. Then $$\|Mz\|^2 = \|U\Sigma V^{\T}z\|^2 =  \|\Sigma V^{\T}z\|^2 \stackrel{d}{=} \|\Sigma z\|^2$$ where we used the unitary invariance of the $\ell_2$ norm in the second equality and the rotational invariance of $z$ in the last equality. The result follows by noting that $$\E_{z \sim \mathcal{N}(0,I_n)}\|\Sigma z\|^2 = \sum_{i=1}^{r} \sigma_i^2 \E_{z_i \sim \mathcal{N}(0,1)}z_i^2 = \sum_{i=1}^r \sigma_i^2 = \|M\|_F^2.$$
\end{proof}

The second lemma asserts that unitary matrices exhibit a commutativity property when acting on the range of an orthogonal projector.
\begin{lem}[Proposition 8.4 in \cite{Troppetal2011}] \label{proj_commutativity_lemma}
Given a matrix $M$, let $\Pcal_{M}$ denote the orthogonal projection onto the range of $M$. Then for any unitary $U$, $U^{\T}\Pcal_{M}U = \Pcal_{U^{\T}M}.$
\end{lem}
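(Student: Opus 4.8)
The plan is to use the standard characterization of an orthogonal projector: a symmetric idempotent matrix is the orthogonal projection onto its own range, and the orthogonal projector onto a given subspace is unique. Concretely, a matrix $P$ equals $\Pcal_N$ precisely when $P^2 = P$, $P^{\T} = P$, and $\operatorname{range}(P) = \operatorname{range}(N)$. So it suffices to set $Q := U^{\T}\Pcal_M U$ and verify these three conditions with $N = U^{\T}M$, after which uniqueness forces $Q = \Pcal_{U^{\T}M}$.

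First I would dispatch idempotence and symmetry, both of which are purely algebraic and follow from the defining properties of $\Pcal_M$ together with $UU^{\T} = I$. Indeed $Q^2 = U^{\T}\Pcal_M (UU^{\T})\Pcal_M U = U^{\T}\Pcal_M^2 U = U^{\T}\Pcal_M U = Q$, using $\Pcal_M^2 = \Pcal_M$, and $Q^{\T} = U^{\T}\Pcal_M^{\T} U = U^{\T}\Pcal_M U = Q$, using $\Pcal_M^{\T} = \Pcal_M$. Neither step requires anything about $U$ beyond $UU^{\T} = I$.

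The one step needing a little care is the range condition, and here I would invoke that $U$ is invertible. Since $U$ is surjective, $\operatorname{range}(\Pcal_M U) = \operatorname{range}(\Pcal_M) = \operatorname{range}(M)$, and then left-multiplying by $U^{\T}$ gives $\operatorname{range}(Q) = U^{\T}\operatorname{range}(M) = \operatorname{range}(U^{\T}M)$, the last equality being the definition of the range of a product. Combining the three verified properties with uniqueness of the orthogonal projector onto $\operatorname{range}(U^{\T}M)$ yields the claim.

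I expect this range identity to be the only place a reader might pause, since it is the sole step that uses unitarity of $U$ beyond the relation $UU^{\T}=I$ — specifically the surjectivity, equivalently invertibility, of $U$. An alternative route that sidesteps the range argument is to assume $M$ has full column rank and write $\Pcal_M = M(M^{\T}M)^{-1}M^{\T}$ explicitly; then $\Pcal_{U^{\T}M} = U^{\T}M(M^{\T}UU^{\T}M)^{-1}M^{\T}U = U^{\T}M(M^{\T}M)^{-1}M^{\T}U = U^{\T}\Pcal_M U$, with the general case recovered via a pseudoinverse or limiting argument. I prefer the characterization approach because it treats arbitrary $M$ uniformly and keeps the computations to a line each.
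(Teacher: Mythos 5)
Your proof is correct and complete: verifying that $U^{\T}\Pcal_M U$ is symmetric, idempotent, and has range equal to $\operatorname{range}(U^{\T}M)$, then invoking uniqueness of the orthogonal projector onto a subspace, is exactly the standard argument, and you correctly isolate the range identity as the one place where invertibility of $U$ (beyond $UU^{\T}=I$) is used. The paper itself offers no proof of this lemma --- it is quoted as Proposition 8.4 of the cited reference --- so there is no in-paper argument to compare against; your write-up would serve as a valid self-contained substitute for the citation.
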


\newpage
\subsection{Models}
\subsubsection{Architecture and Training Details} \label{sec:arch}
We train $128$px and $64$px variants of the Glow architecture \cite{kingma2018glow}. This model uses a sequence of invertible flow steps, each comprising of an activation normalization layer, a $1 \times 1$ invertible convolutional layer, and an affine couping layer. Let $K$ be the number of steps of flow before each splitting layer, and let $L$ be the number of times the splitting is performed. For denoising, we use $K=48$, $L=4$ for $64$px recovery. For compressive sensing, we use $K=18$ and $L=4$ for $64$px recovery and $K=32$, $L=6$ for $128$px recovery. All models are trained over $5$-bit images using a learning rate of $0.0001$ and $10,000$ warmup iterations, as in \cite{kingma2018glow}. When solving inverse problems using Glow, original $8$-bit images were used.

We observed numerical instability when solving inverse problems in the $128$px case, in that activations of the Glow network could become too large to compute during inversion. To mitigate this problem, we train a modified version of the Glow reference implementation in which we add a small constant $\epsilon = 0.0005$ to computed scale parameters in each Actnorm layer, thus preventing a ``division by zero error'' during inversion. Additionally, in forward passes of the Glow model we clip activations to the range $[-40, 40]$.

We train a $64$px DCGAN \cite{radford2015unsupervised} and a $128$px PGGAN \cite{karras2017progressive} in order to compare the Glow model to traditional GAN architectures. The DCGAN model has $d=5$ upsampling layers implemented through transpose convolutions. The PGGAN model has $d=5$ upsampling layers implemented through nearest neighbor upsampling followed by a convolutional layer. In both cases, the first layer of each GAN includes $k_0=512$ activation channels and after each upsampling operation the number of activation channels is reduced by half. These architectures are used as-is for both the CSGM \cite{bora2017compressed} and the Image Adaptive \cite{hussein2019imageadaptive} compressive sensing procedures.

Lastly, we also use an overparametrized variant of the Deep Decoder \cite{heckel2018deep} in order to compare to an unlearned neural network for compressive sensing. We use $d=5$ and $d=6$ upsampling layers for the $64$px and $128$px cases respectively, implemented in both cases with convolutional layers followed by bilinear upsampling. The number of activation channels in each layer is held fixed at $k=250$ and $k=700$ for the $64$px and $128$px cases respectively.
\begin{figure}[h!]
    \centering 
    \begin{tabular*}{\textwidth}{l @{\extracolsep{\fill}} lllc}
        \toprule
        \textit{$64$px Models} & Hyperparameters & Repr. Size & Model Size & Overparam. Ratio \\ \midrule \midrule
        Glow (Denoising) & $K=48$, $L=4$, affine coupling & 12,288 & 67,172,224 & 1.000 \\ \midrule
        Glow (CS) & $K=18$, $L=4$, affine coupling & 12,288 & 25,783,744 & 1.000 \\ \midrule
        DCGAN &  $k_0=512$, $d=5$ & 100 & 3,576,804  & 0.0081 \\ \midrule
        IA-DCGAN & $k_0=512$, $d=5$ & 3,576,804 & 3,576,804 & 291.1 \\ \midrule
        Deep Decoder & $k=250$, $d=5$ & 253,753 & 254,753 & 20.65 \\ \bottomrule
    \end{tabular*}
    \par\bigskip
    \begin{tabular*}{\textwidth}{l @{\extracolsep{\fill}} lllc}
        \toprule
        \textit{$128$px Models} & Hyperparameters & Repr. Size & Model Size & Overparam. Ratio \\ \midrule \midrule
        Glow & $K=32$, $L=6$, affine coupling & 49152 & 129,451,520 & 1.000 \\ \midrule
        PGGAN & $k_0=512$, $d=5$ & 512 & 13,625,904 & 0.0104 \\ \midrule
        IA-PGGAN & $k_0=512$, $d=5$ & 13,625,904 & 13,625,904 & 277.2 \\ \midrule
        Deep Decoder & $k=700$, $d=6$ & 2,462,603 & 2,473,803 & 50.10 \\ \bottomrule
    \end{tabular*}
    \caption{Summary of the model parameters for all models used in our experiments. Hyperparameters refer to model-specific configurations as described in the text. Representation Size (Repr. Size) is the dimensionality of each model's image representation, i.e. the total number of optimizable parameters available during inversion in each inverse problem. The model size is the total sum of all parameters of each model, including those of the image representation. The Overparametrization Ratio (Overparam. Ratio) describes the representation size as a fraction of the output image dimensionality.}
\end{figure}

\newpage
\subsubsection{Unconditional Output Samples}
To demonstrate the successful training of our generative models, we provide output samples from each of the learned models used in our experiments. 
\begin{figure}[h!]
    \centering
    \begin{subfigure}[h]{0.45\textwidth}
        \centering
        \includegraphics[width=\textwidth]{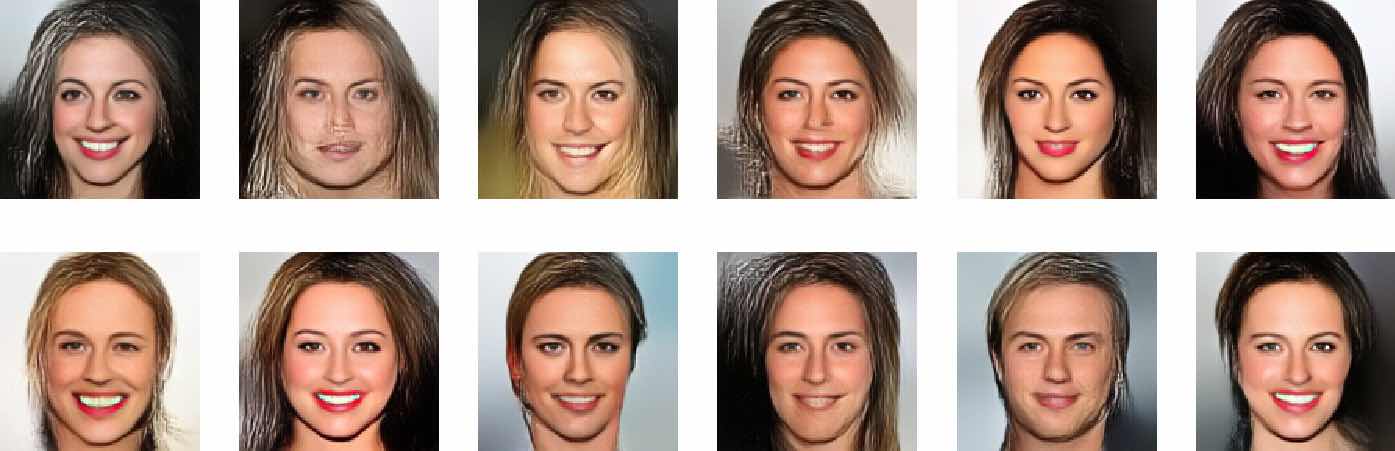}
        \caption{\label{fig:glow_128_samples} $128$px Glow Model}
    \end{subfigure}
    \hspace{0.5cm}
    \begin{subfigure}[h]{0.45\textwidth}
        \centering
        \includegraphics[width=\textwidth]{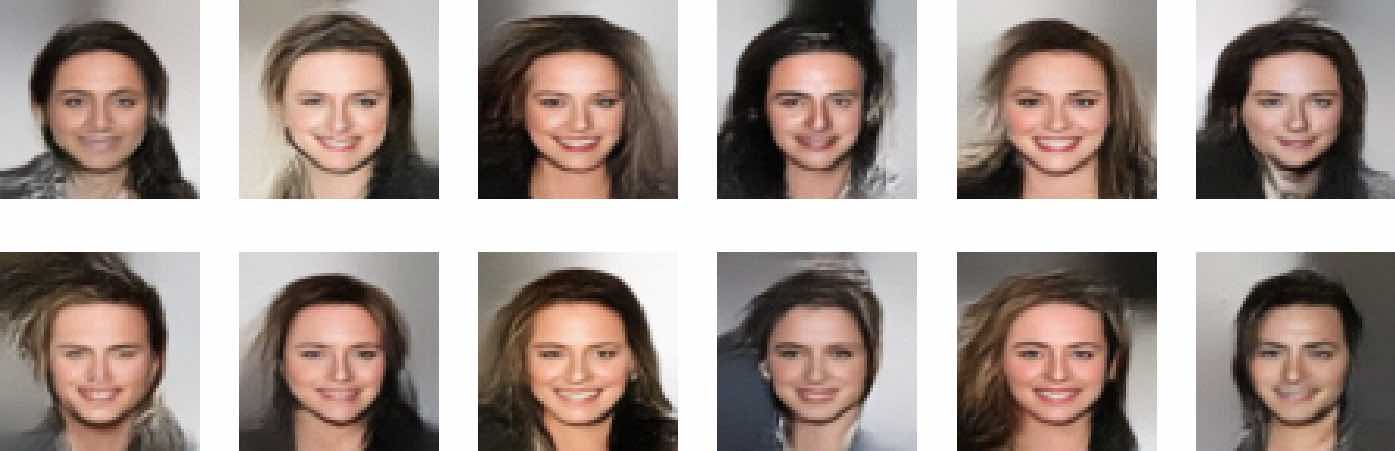}
        \caption{\label{fig:glow_64_samples} $64$px Glow Model}
    \end{subfigure}
    \par\bigskip
    \begin{subfigure}[h]{0.45\textwidth}
        \centering
        \includegraphics[width=\textwidth]{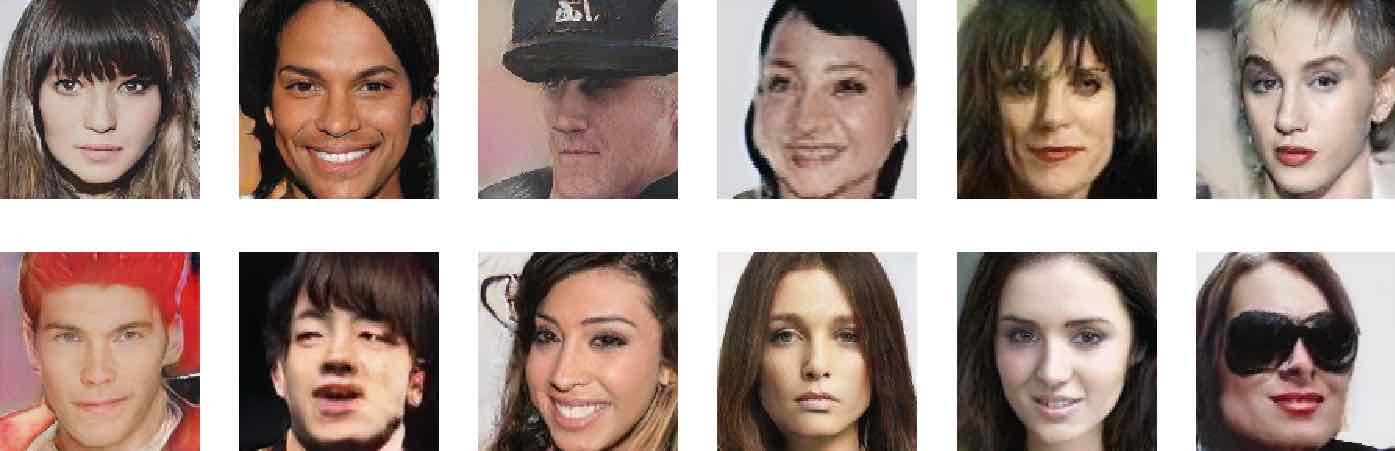}
        \caption{\label{fig:pggan_samples} PGGAN Model}
    \end{subfigure}
    \hspace{0.5cm}
    \begin{subfigure}[h]{0.45\textwidth}
        \centering
        \includegraphics[width=\textwidth]{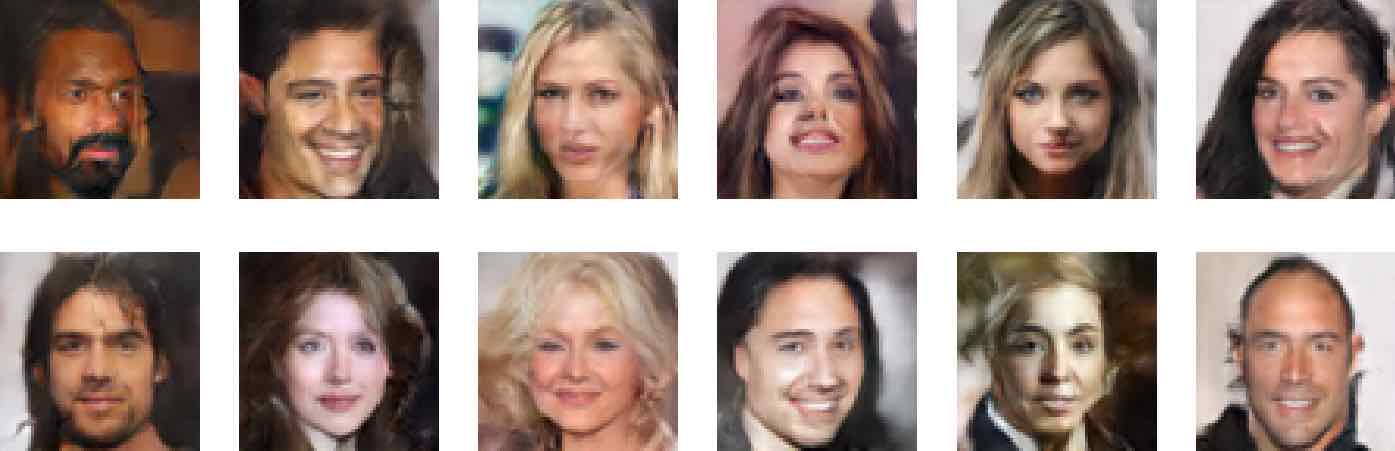}
        \caption{\label{fig:dcgan_samples} DCGAN Model}
    \end{subfigure}

    \caption{Samples from each of our trained generative models.}
\end{figure}
\subsubsection{Test Set Images}
Throughout our paper, we present experiments across two test sets, one from the same distribution of the trained generative models and another which is out-of-distribution. In particular, our in-distribution test set is sampled randomly from a validation split of the CelebA-HQ dataset \cite{karras2017progressive} and our out-of-distribution test set is sampled randomly from the Flickr Faces High Quality (FFHQ) Dataset \cite{karras2018stylebased}. None of the test set images are seen by the generative models during their training phases. A few samples from both test sets are shown below.
\begin{figure}[h!]
    \centering
    \begin{subfigure}[h]{0.45\textwidth}
        \centering
        \includegraphics[width=\textwidth]{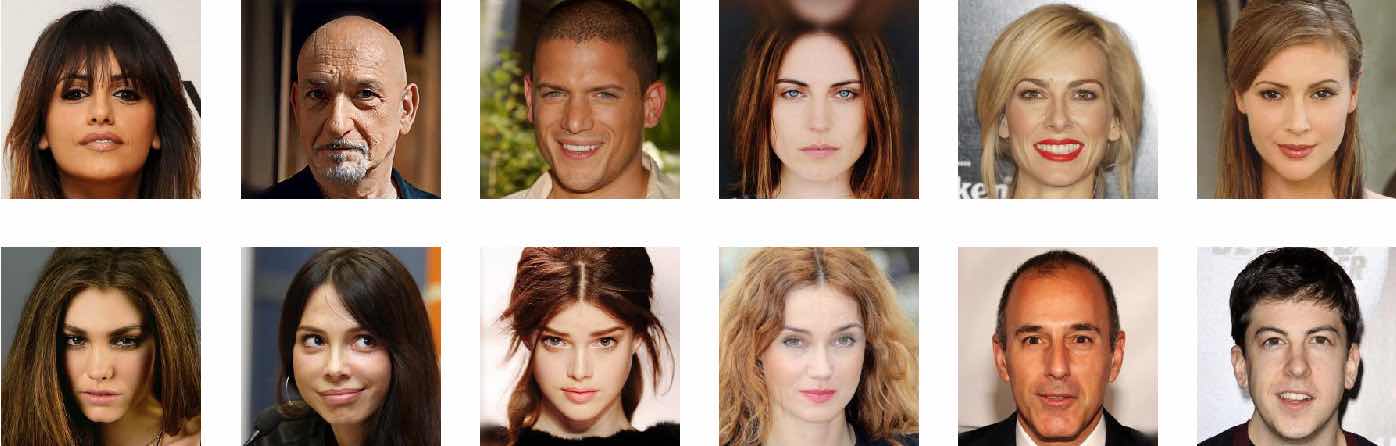}
        \caption{\label{fig:glow_128_samples} CelebA-HQ Test Set}
    \end{subfigure}
    \hspace{0.5cm}
    \begin{subfigure}[h]{0.45\textwidth}
        \centering
        \includegraphics[width=\textwidth]{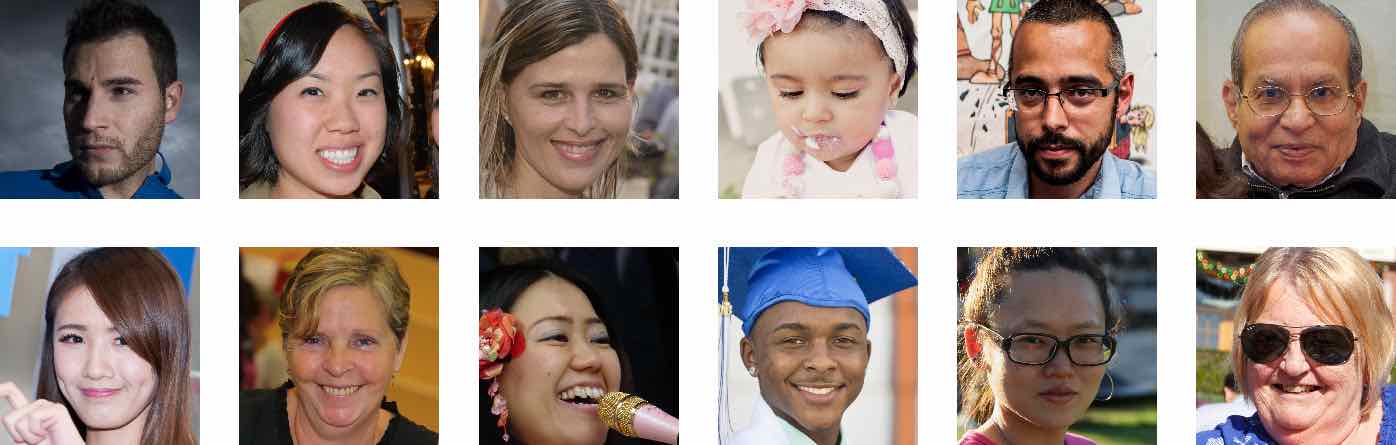}
        \caption{\label{fig:glow_64_samples} FFHQ Test Set}
    \end{subfigure}
    \caption{A few samples from each of the test sets used in our experiments.}
\end{figure}

\newpage 
\subsection{Denoising: Additional Experiments}
\label{sec:appendix-denoising}
We additionally provide qualitative and quantitative denoising results over the test set of in-distribution CelebA images in Figures \ref{fig:denosing-psnr-vs-gamma-all-levels} and \ref{fig:denoising-vs-gamma-visually-higher-noise-2}. Recall that we formulate denoising as the following empirical risk minimization problem:
\begin{align*}
    \min_{z \in \R^n} \|G(z) - y \|^2 + \gamma \|z\|^2 \qquad y = x_0 + \eta
\end{align*}
Where $\eta \sim \mathcal{N}(0, \sigma^2 I_n)$ is additive Gaussian noise. PSNRs for varying choices of the penalization parameter $\gamma$ under noise levels $\sigma = 0.01, 0.05, 0.1,$ and $0.2$ are presented in Figure \ref{fig:denosing-psnr-vs-gamma-all-levels} below.

The central message is that the Glow prior outperforms the DCGAN prior uniformly across all $\gamma$ due to the representation error of the DCGAN prior. In addition, when $\gamma$ is chosen appropriately, regularization improves the performance of Glow, which can outperform the state-of-the-art BM3D algorithm at high noise levels such as $\sigma = 0.2$, and can offer comparable performance at lower noise levels. This is in contrast to the DCGAN prior, whose performance is harmed by increased regularization. 

\begin{figure}[h!]
    \centering
    \hspace{-0.1in}
    \includegraphics[width=0.4\textwidth]{./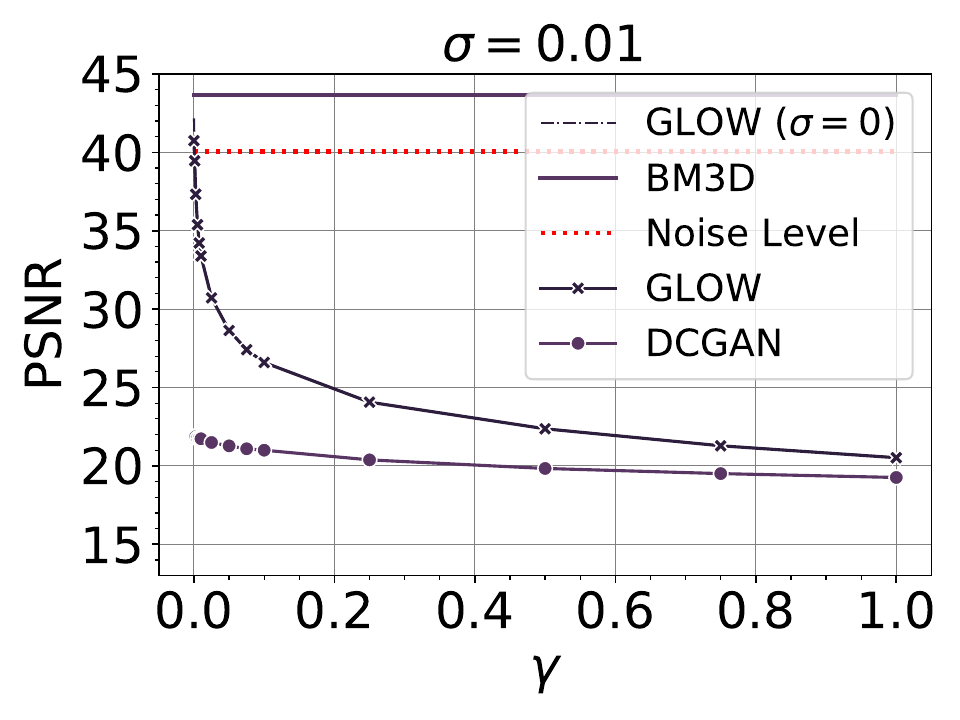}
    \includegraphics[width=0.4\textwidth]{./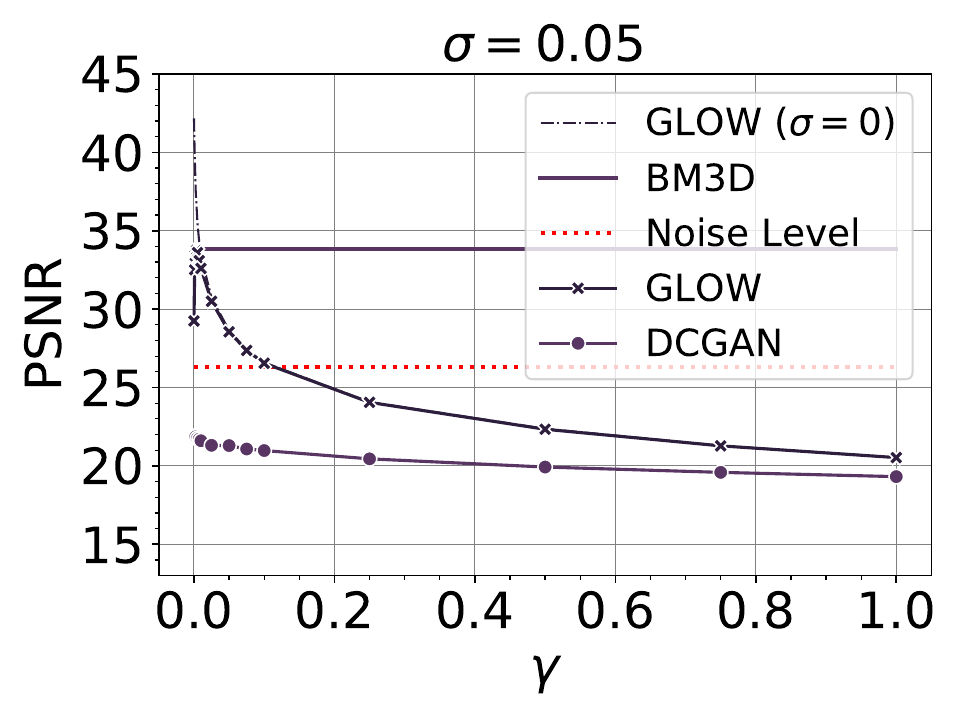}\\
    \includegraphics[width=0.4\textwidth]{./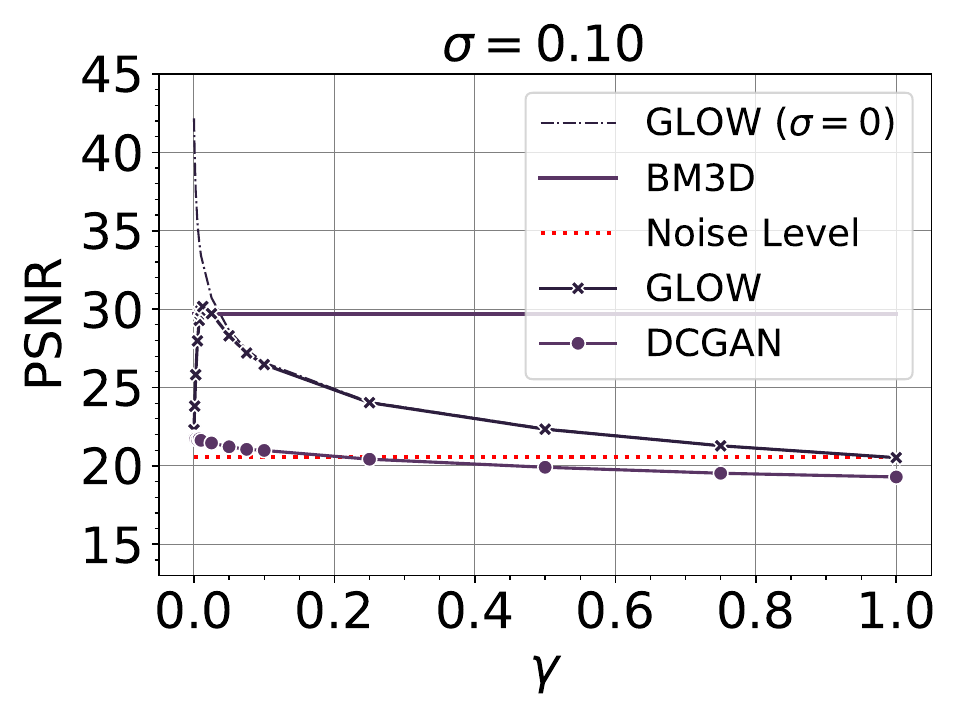}
    \includegraphics[width=0.4\textwidth]{./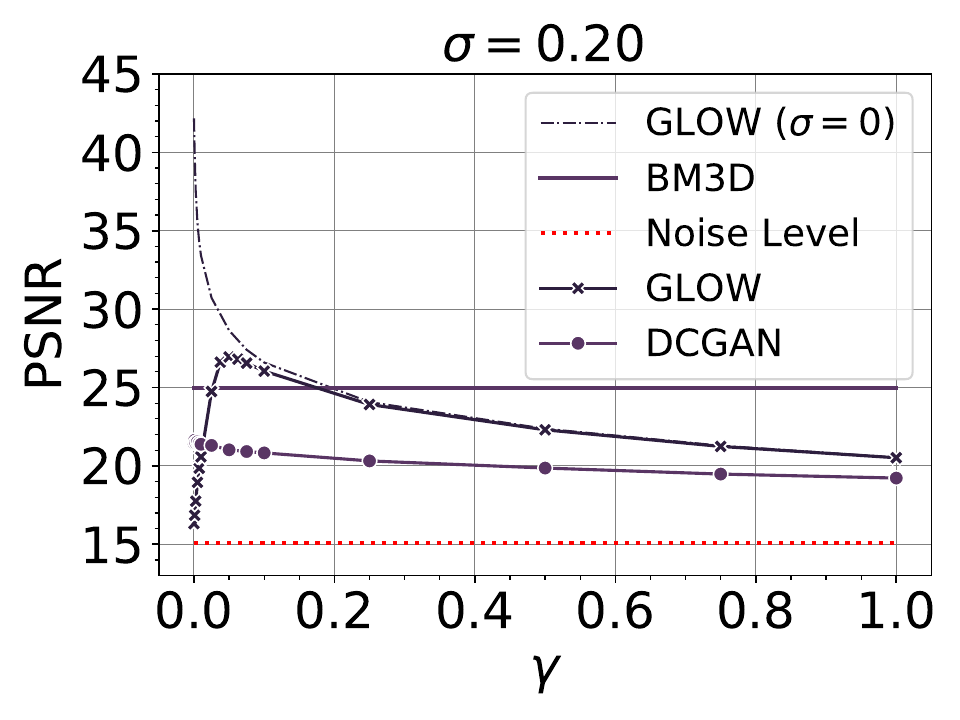}
    \caption{Image Denoising --- Recovered PSNR values as a function of $\gamma$ on $N = 50$ in-distribution test set CelebA images. We report the average PSNR after applying BM3D, under the DCGAN prior, and under the Glow prior, at noise levels $\sigma = 0.01, 0.05, 0.10, 0.20$. For reference, we also show the average PSNR of the original noisy images and the average PSNR of images recovered by the Glow prior in the noiseless case ($\sigma = 0$). }
    \label{fig:denosing-psnr-vs-gamma-all-levels}
\end{figure}

\begin{figure}[h!]
    \includegraphics[width=\textwidth]{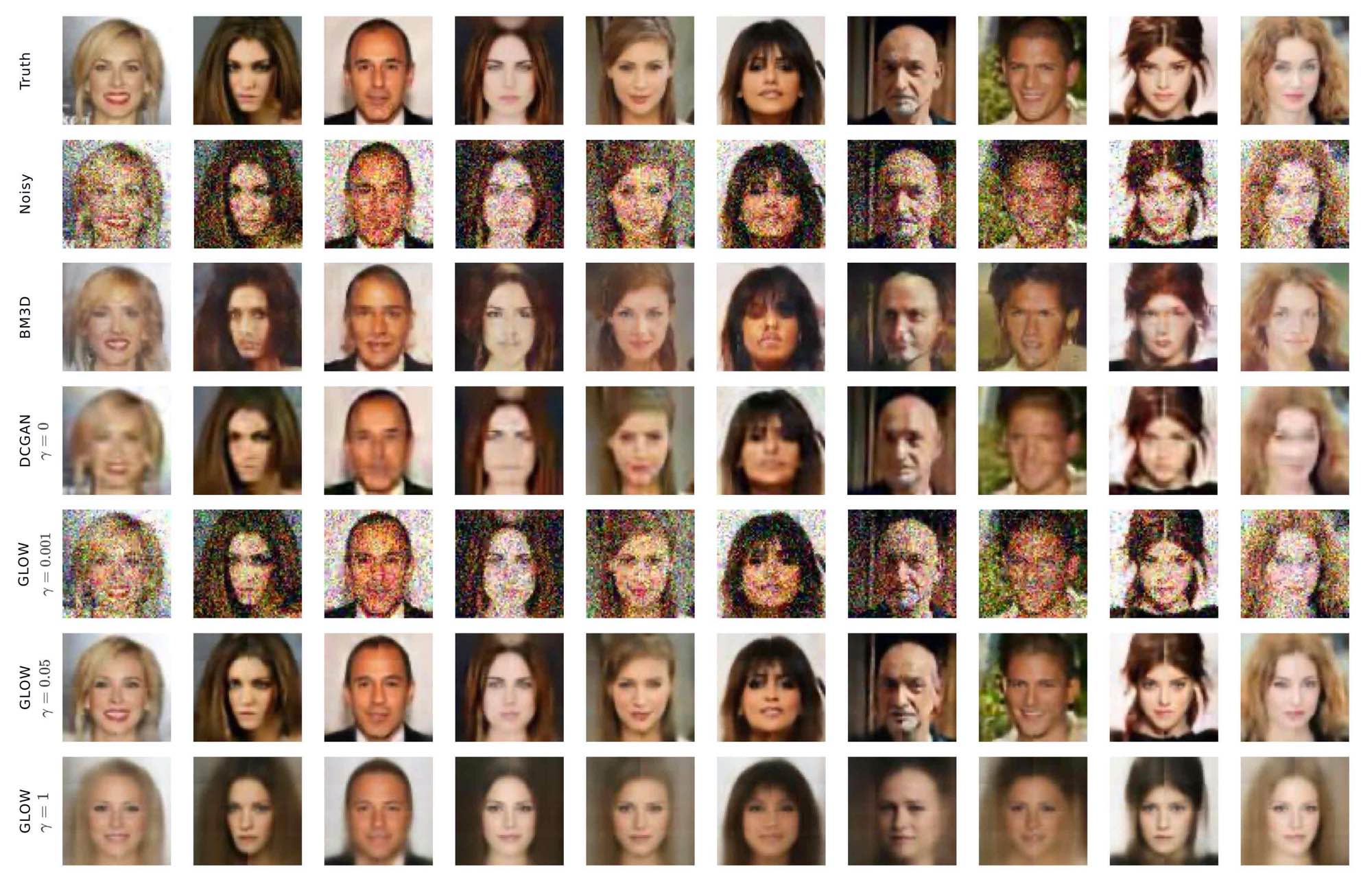}
    \caption{Image Denoising --- Visual comparisons under the Glow prior, the DCGAN prior, and BM3D at noise level $\sigma = 0.2$ on a sample of CelebA in-distribution test set images.  Under the DCGAN prior, we only show the case of $\gamma=0$ as this consistently gives the best performance.  Under the Glow prior, the best performance is achieved with $\gamma = 0.05$, overfitting of the image occurs with $\gamma=0.001$ and underfitting occurs with $\gamma = 1$. Note that the Glow prior with $\gamma = 0.05$ also gives a sharper image than BM3D.}
    \label{fig:denoising-vs-gamma-visually-higher-noise-2}
\end{figure}

\clearpage

\newpage

\subsection{Compressive Sensing: Additional Experiments}
We first analyze our formulation of compressive sensing as an empirical risk minimization problem. Recall that we solve the following optimization problem:
\begin{align*}
    \min_{z \in \R^n} \|A G(z) - y \|^2 & \qquad y = Ax_0 + \eta
\end{align*}
Where $A$ is a Gaussian random measurement matrix with i.i.d. $\mathcal{N}(0, 1/m)$ entries, and $\eta$ is noise normalized so that $\sqrt{\mathbb{E}\|\eta\|^2} = 0.1$. This formulation includes no explicit penalization on the likelihood of the latent code $z$, and instead relies on implicit regularization through the use of a gradient based optimization method initialized at $z_0 = 0$. To justify this formulation, we study various alternative methods of initialization: $z_0 = 0$, $z_0 \sim \mathcal{N}(0, 0.1^2I_n)$, $z_0 \sim \mathcal{N}(0, 0.7^2I_n)$, $z_0 = G^{-1}(x_0)$ with $x_0$ given by the solution to Lasso with respect to the wavelet basis, and $z_0 = G^{-1}(x_0)$ where $x_0$ is perturbed by a random point in the null space of $A$. For each initialization, we plot recovery performance on a compressive sensing task as a function of $\gamma$ for the regularized objective:
\begin{equation*}
    \min_{z \in \R^n} \|A G(z) - y \|^2 + \gamma \|z\|^2 \label{eqn:cs-penalty}
\end{equation*}
As shown in Fig. \ref{fig:cs-ablation}, the Glow model shows best performance with $\gamma=0$ with initialization at the zero vector, despite there being no explicit penalization on the likelihood of recovered latent codes. 
\begin{figure}[h]
    \centering
    \includegraphics[width=0.4\textwidth]{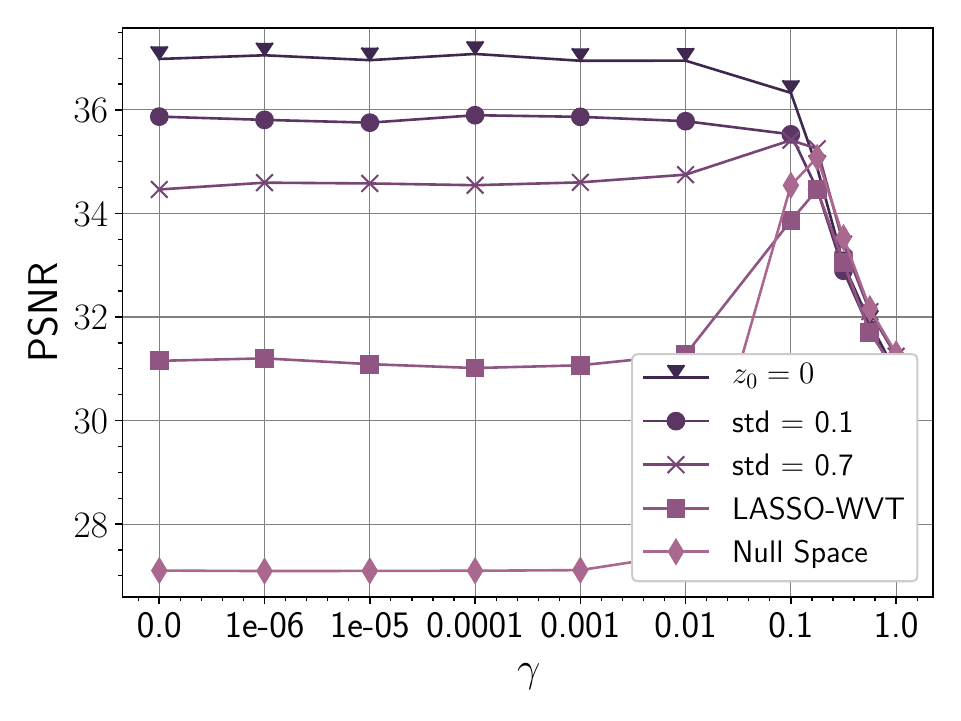}
    \caption{Performance of the Glow model in Compressive Sensing under various initialization strategies (as described in the text). The Glow model shows strongest performance when initialized with the zero vector and with no explicit penalization on the latent code likelihood. The task is $64$px CelebA image recovery using $m = 5000$ ($\approx 50$\%) Gaussian measurements.}
    \label{fig:cs-ablation}
\end{figure}

\newpage 
We show in Fig. \ref{fig:ssim} the results of our compressive sensing recovery experiments using the Structural Similarity index (SSIM) as a recovery quality metric \cite{wang2004ssim}. SSIM is designed to indicate perceptual quality by extracting structural information from images.

\begin{figure}[h]
    \centering
    \includegraphics[width=\textwidth]{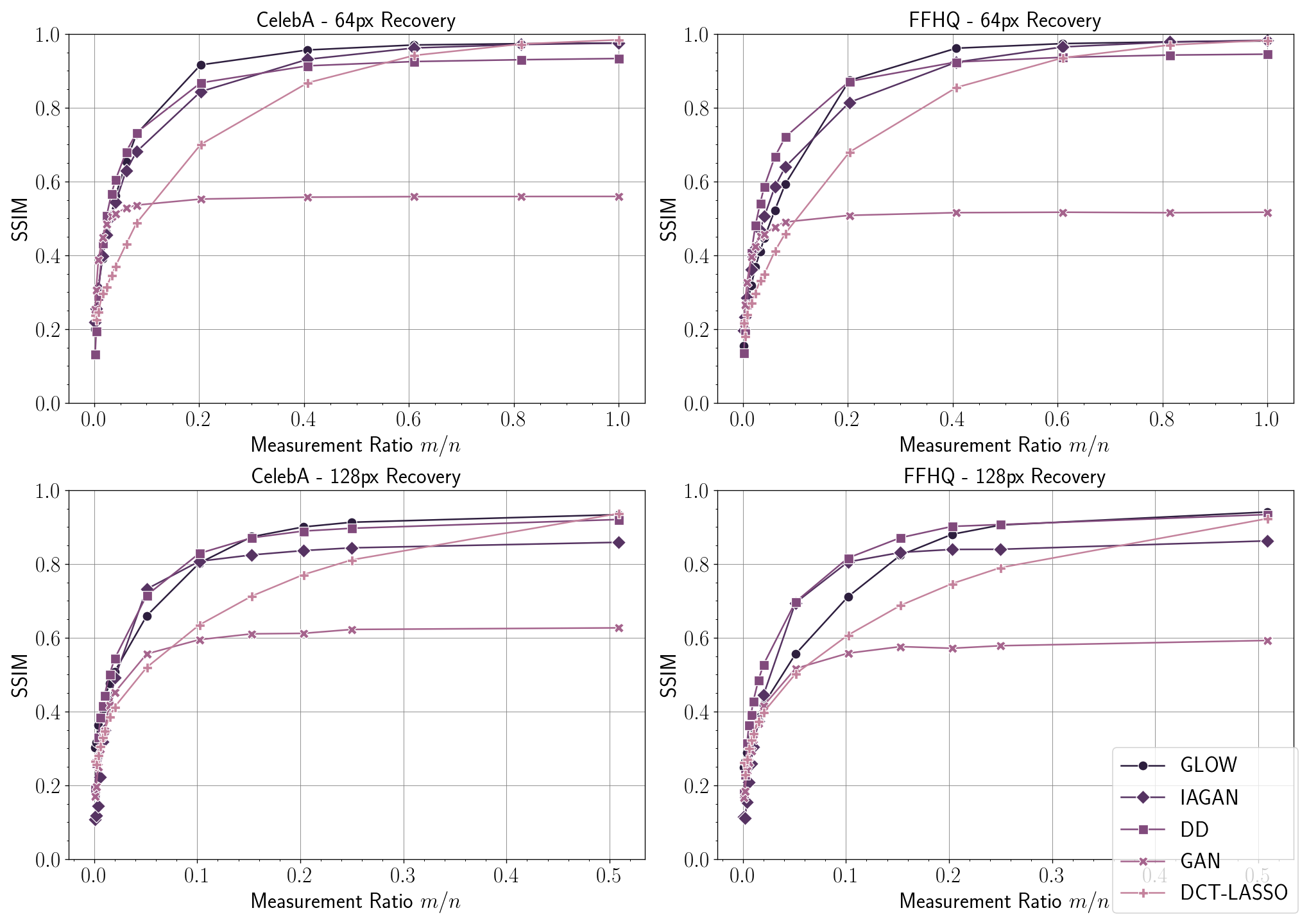}
    \caption{$64$px and $128$px Compressive Sensing results using $N=1000$ and $N=100$ test set images respectively. Each plot shows SSIM (higher is better, ranging in $[0, 1]$) across a variety of measurement ratios.}
    \label{fig:ssim}
\end{figure}

\clearpage \newpage 
\subsubsection{Compressive Sensing $64$px Sample Sheets}
We provide here additional sample sheets for our $64$px experiments in compressive sensing. For in-distribution images, we show in Figures \ref{fig:cs64_indist_2500} and \ref{fig:cs64_indist_10000} qualitative examples of the image reconstructions for the Glow prior, the DCGAN prior, the IA-DCGAN prior, and the Deep Decoder. We replicate the same experiments for out-of-distribution FFHQ images in Figures \ref{fig:cs64_outdist_2500} and \ref{fig:cs64_outdist_10000}.

% ---------------------- celeba ------------------------------
\begin{figure}[h!]
    \centering
    \includegraphics[width=\textwidth]{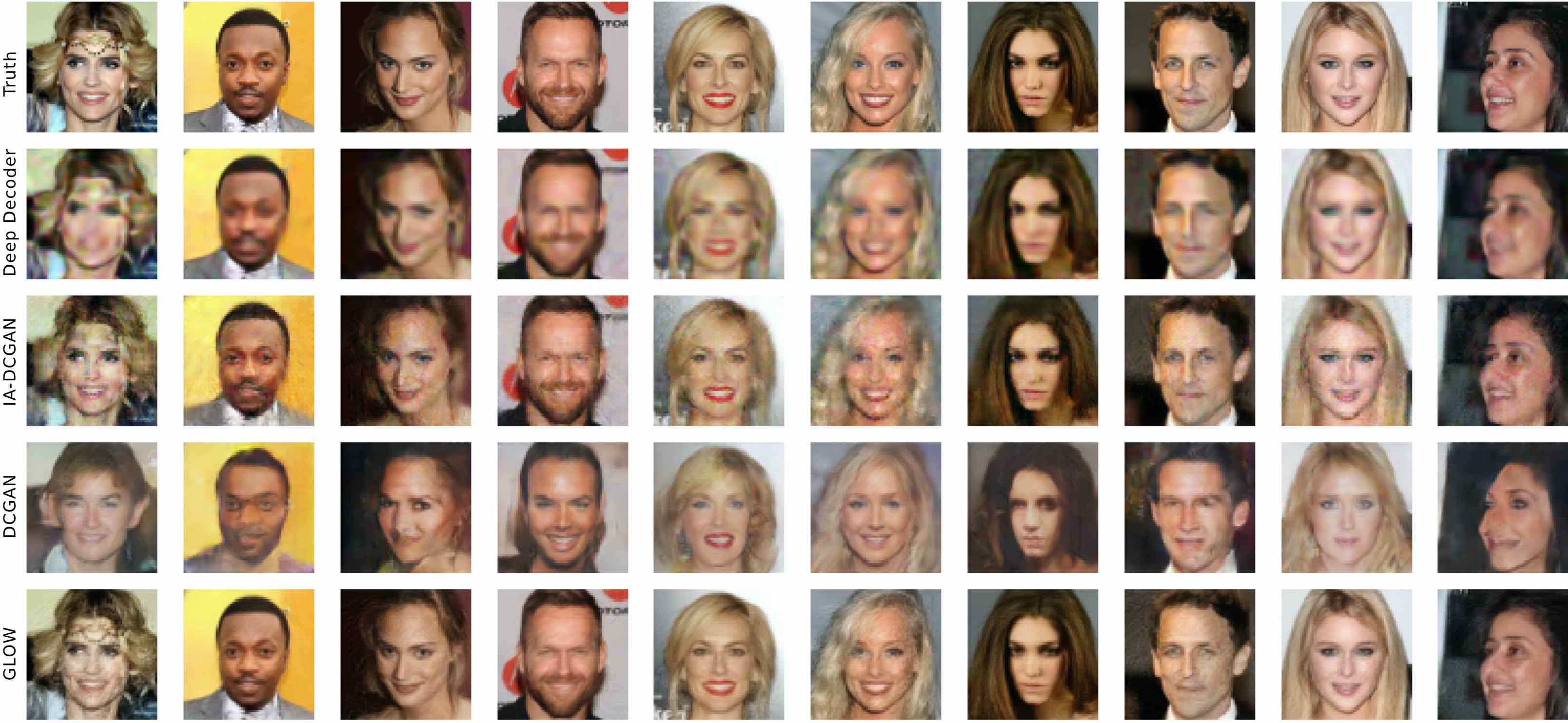}
    \caption{Compressive sensing visual comparisons --- Recoveries on a sample of in-distribution test set images with a number $m = 2500$ ($\approx 20\%$) of measurements under the Glow prior, the DCGAN prior, the IA-DCGAN prior, and the Deep Decoder.}
    \label{fig:cs64_indist_2500}
\end{figure}

\begin{figure}[h!]
    \centering
    \includegraphics[width=\textwidth]{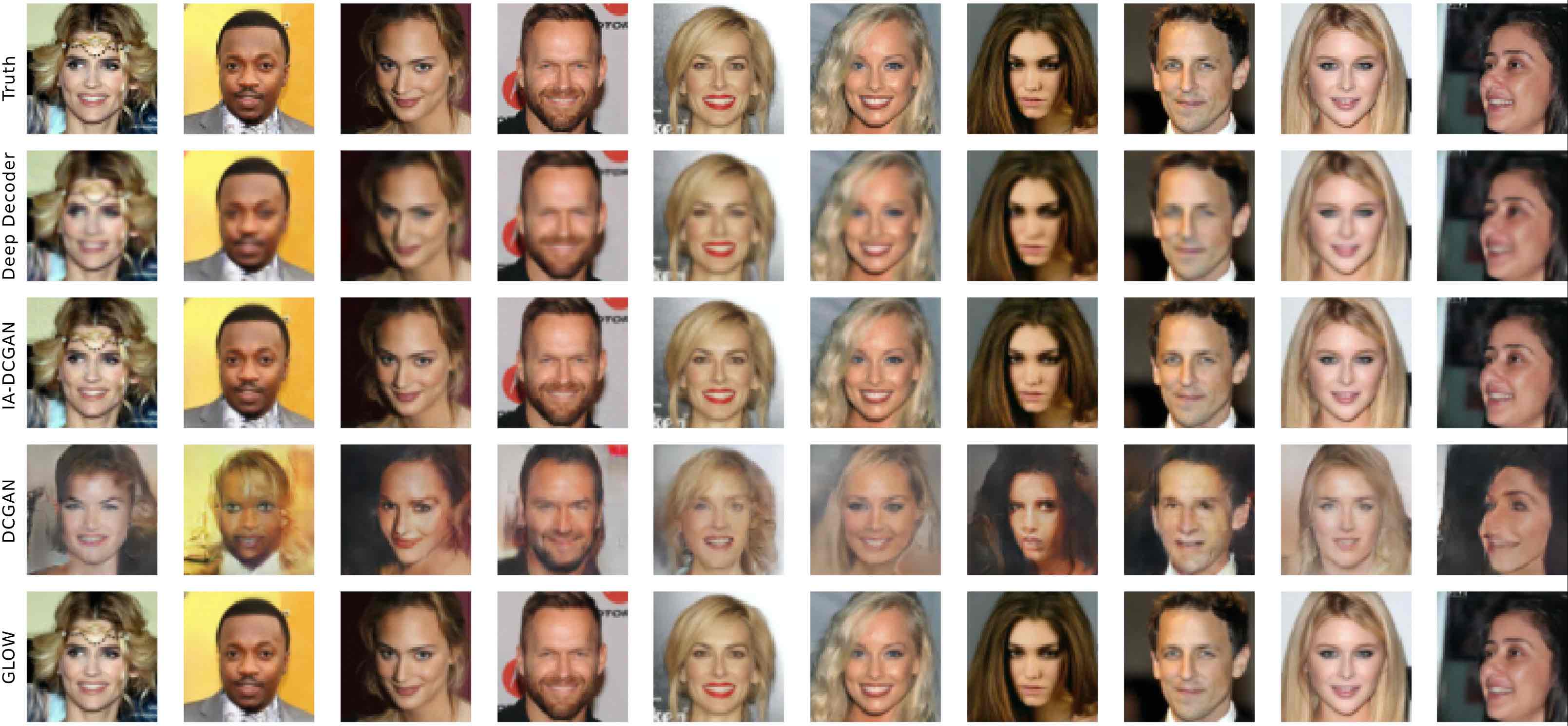}
    \caption{Compressive sensing visual comparisons --- Recoveries on a sample of in-distribution test set images with a number $m = 10000$ ($\approx 80\%$) of measurements under the Glow prior, the DCGAN prior, the IA-DCGAN prior, and the Deep Decoder.}
    \label{fig:cs64_indist_10000}
\end{figure}

% ---------------------- ffhq ------------------------------
\begin{figure}[h!]
    \centering
    \includegraphics[width=\textwidth]{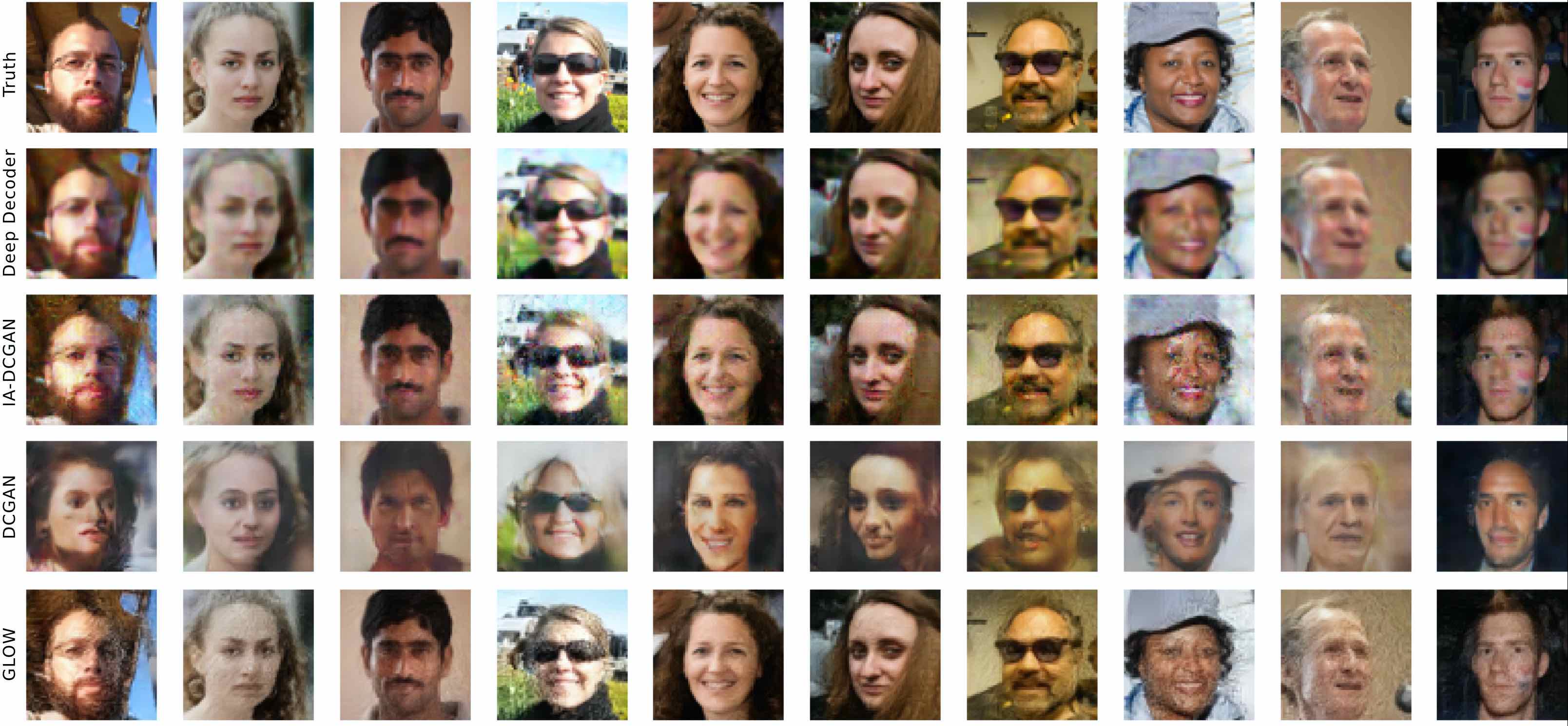}
    \caption{Compressive sensing visual comparisons --- Recoveries on a sample of out-of-distribution (FFHQ) test set images with a number $m = 2500$ ($\approx 20\%$) of measurements under the Glow prior, the DCGAN prior, the IA-DCGAN prior, and the Deep Decoder.}
    \label{fig:cs64_outdist_2500}
\end{figure}

\begin{figure}[h!]
    \centering
    \includegraphics[width=\textwidth]{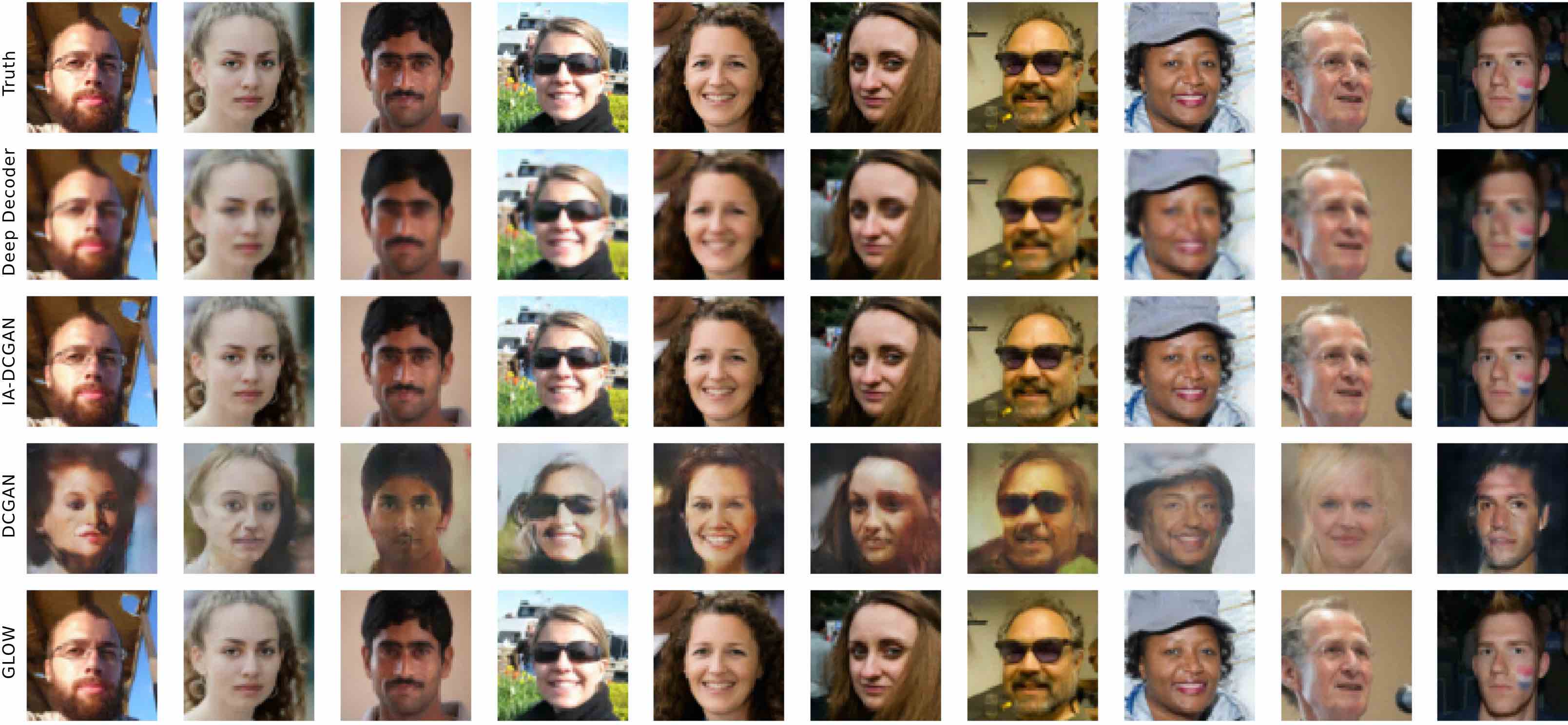}
    \caption{Compressive sensing visual comparisons --- Recoveries on a sample of out-of-distribution (FFHQ) test set images with a number $m = 10000$ ($\approx 80\%$) of measurements under the Glow prior, the DCGAN prior, the IA-DCGAN prior, and the Deep Decoder.}
    \label{fig:cs64_outdist_10000}
\end{figure}

\clearpage

To further investigate whether the Glow prior continues to be an effective proxy for arbitrarily out-of-distribution images, we tested arbitrary natural images such as a car, house door, and butterfly wings, which are all semantically unrelated to CelebA images. In general, we found that Glow is an effective prior at compressive sensing of out-of-distribution natural images, which are assigned a relatively high likelihood score (small normed latent representations) as compared to noisy, unnatural images. On these images, Glow also outperforms LASSO. This means that invertible networks have at least partially learned something more general about natural images from the CelebA dataset -- there may be some high level features that face images share with other natural images, such as smooth regions followed by discontinuities, etc. This allows the Glow model to extend its effectiveness as a prior to other natural images beyond just the training set.

As compared to in-distribution training images, however, semantically unrelated images are assigned very low-likelihood scores by the Glow model, causing instability issues. In particular, an L-BFGS search for the solution of an inverse problem to recover a low-likelihood image leads the iterates into neighborhoods of low-likelihood representations that may induce instability. All the network parameters such as scaling in the coupling layers of Glow network are learned to behave stably with high likelihood representations. However, on very low-likelihood representations, unseen during the training process, the networks becomes unstable and outputs of network begin to diverge to very large values; this may be due to several reasons, such as normalization (scaling) layers not being tuned to the unseen representations. See Section \ref{sec:arch} for details on our approach to handing these instabilities.

We show in Figure \ref{fig:OOD-2500} a comparison of the performance of the LASSO-DCT, LASSO-WVT, DCGAN, and Glow priors on the compressive sensing of $64$px out-of-distribution images for $m=2500$ ($\approx 20$\%) measurements.

\clearpage
\newpage
%-------------------- Author CS: m = 2,500 -------------------
\begin{figure}[h!]
    \centering
    \raisebox{0.01in}{\rotatebox[origin=t]{0}{$m=2,500$}}\\
    \raisebox{0.15in}{\rotatebox[origin=t]{90}{\scriptsize Truth}}
    %	\raisebox{0.25in}{\rotatebox[origin=t]{90}{\tiny N/A}}
    \includegraphics[width=0.069\textwidth]{./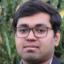}
    \includegraphics[width=0.069\textwidth]{./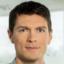}
    \includegraphics[width=0.069\textwidth]{./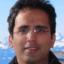}
    \includegraphics[width=0.069\textwidth]{./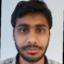}
    \includegraphics[width=0.069\textwidth]{./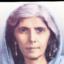}
    \includegraphics[width=0.069\textwidth]{./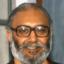}
    \includegraphics[width=0.069\textwidth]{./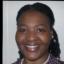}
    \includegraphics[width=0.069\textwidth]{./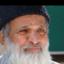}
    \includegraphics[width=0.069\textwidth]{./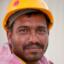}
    \includegraphics[width=0.069\textwidth]{./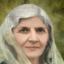}
    \includegraphics[width=0.069\textwidth]{./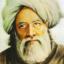}
    \includegraphics[width=0.069\textwidth]{./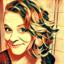}
    \includegraphics[width=0.069\textwidth]{./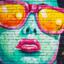}\\
        % LASSO-DCT m = 2500
    \raisebox{0.17in}{\rotatebox[origin=t]{90}{\scriptsize DCT}}
    %	\raisebox{0.17in}{\rotatebox[origin=t]{90}{\tiny m=2500}}
    \includegraphics[width=0.069\textwidth]{./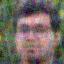}
    \includegraphics[width=0.069\textwidth]{./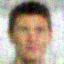}
    \includegraphics[width=0.069\textwidth]{./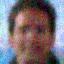}
    \includegraphics[width=0.069\textwidth]{./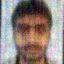}
    \includegraphics[width=0.069\textwidth]{./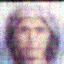}
    \includegraphics[width=0.069\textwidth]{./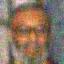}
    \includegraphics[width=0.069\textwidth]{./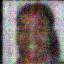}
    \includegraphics[width=0.069\textwidth]{./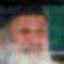}
    \includegraphics[width=0.069\textwidth]{./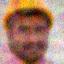}
    \includegraphics[width=0.069\textwidth]{./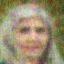}
    \includegraphics[width=0.069\textwidth]{./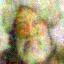}
    \includegraphics[width=0.069\textwidth]{./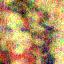}
    \includegraphics[width=0.069\textwidth]{./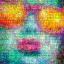}\\
        % LASSO-WVT m =2500
    \raisebox{0.17in}{\rotatebox[origin=t]{90}{\scriptsize WVT}}
    %	\raisebox{0.17in}{\rotatebox[origin=t]{90}{\tiny m=2500}}
    \includegraphics[width=0.069\textwidth]{./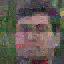}
    \includegraphics[width=0.069\textwidth]{./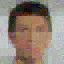}
    \includegraphics[width=0.069\textwidth]{./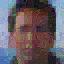}
    \includegraphics[width=0.069\textwidth]{./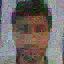}
    \includegraphics[width=0.069\textwidth]{./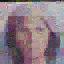}
    \includegraphics[width=0.069\textwidth]{./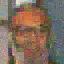}
    \includegraphics[width=0.069\textwidth]{./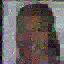}
    \includegraphics[width=0.069\textwidth]{./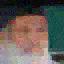}
    \includegraphics[width=0.069\textwidth]{./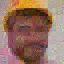}
    \includegraphics[width=0.069\textwidth]{./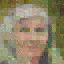}
    \includegraphics[width=0.069\textwidth]{./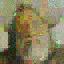}
    \includegraphics[width=0.069\textwidth]{./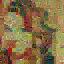}
    \includegraphics[width=0.069\textwidth]{./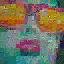}\\
    % DCGAN m =2500
    \raisebox{0.17in}{\rotatebox[origin=t]{90}{\scriptsize DCGAN}}
    %	\raisebox{0.17in}{\rotatebox[origin=t]{90}{\tiny m=1000}}
    \includegraphics[width=0.069\textwidth]{./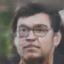}
    \includegraphics[width=0.069\textwidth]{./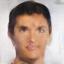}
    \includegraphics[width=0.069\textwidth]{./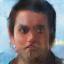}
    \includegraphics[width=0.069\textwidth]{./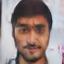}
    \includegraphics[width=0.069\textwidth]{./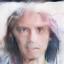}
    \includegraphics[width=0.069\textwidth]{./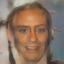}
    \includegraphics[width=0.069\textwidth]{./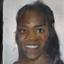}
    \includegraphics[width=0.069\textwidth]{./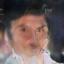}
    \includegraphics[width=0.069\textwidth]{./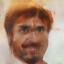}
    \includegraphics[width=0.069\textwidth]{./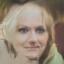}
    \includegraphics[width=0.069\textwidth]{./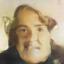}
    \includegraphics[width=0.069\textwidth]{./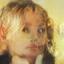}
    \includegraphics[width=0.069\textwidth]{./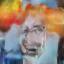}\\
    \raisebox{0.17in}{\rotatebox[origin=t]{90}{\scriptsize Glow}}
    \includegraphics[width=0.069\textwidth]{./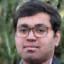}
    \includegraphics[width=0.069\textwidth]{./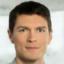}
    \includegraphics[width=0.069\textwidth]{./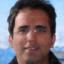}
    \includegraphics[width=0.069\textwidth]{./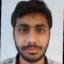}
    \includegraphics[width=0.069\textwidth]{./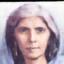}
    \includegraphics[width=0.069\textwidth]{./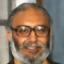}
    \includegraphics[width=0.069\textwidth]{./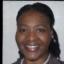}
    \includegraphics[width=0.069\textwidth]{./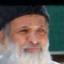}
    \includegraphics[width=0.069\textwidth]{./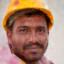}
    \includegraphics[width=0.069\textwidth]{./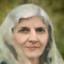}
    \includegraphics[width=0.069\textwidth]{./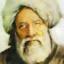}
    \includegraphics[width=0.069\textwidth]{./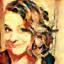}
    \includegraphics[width=0.069\textwidth]{./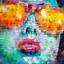}\\
    %Truth
    \raisebox{0.15in}{\rotatebox[origin=t]{90}{\scriptsize Truth}}
    \includegraphics[width=0.069\textwidth]{./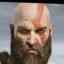}
    \includegraphics[width=0.069\textwidth]{./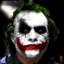}
    \includegraphics[width=0.069\textwidth]{./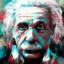}
    \includegraphics[width=0.069\textwidth]{./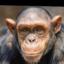}
    \includegraphics[width=0.069\textwidth]{./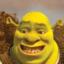}
    \includegraphics[width=0.069\textwidth]{./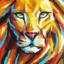}
    \includegraphics[width=0.069\textwidth]{./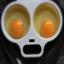}
    \includegraphics[width=0.069\textwidth]{./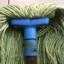}
    \includegraphics[width=0.069\textwidth]{./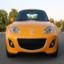}
    \includegraphics[width=0.069\textwidth]{./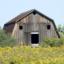}
    \includegraphics[width=0.069\textwidth]{./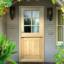}
    \includegraphics[width=0.069\textwidth]{./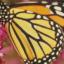}
    \includegraphics[width=0.069\textwidth]{./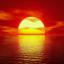}\\
    %DCT
    \raisebox{0.17in}{\rotatebox[origin=t]{90}{\scriptsize DCT}}
    \includegraphics[width=0.069\textwidth]{./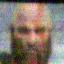}
    \includegraphics[width=0.069\textwidth]{./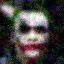}
    \includegraphics[width=0.069\textwidth]{./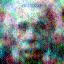}
    \includegraphics[width=0.069\textwidth]{./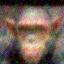}
    \includegraphics[width=0.069\textwidth]{./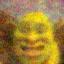}
    \includegraphics[width=0.069\textwidth]{./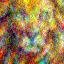}
    \includegraphics[width=0.069\textwidth]{./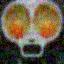}
    \includegraphics[width=0.069\textwidth]{./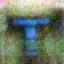}
    \includegraphics[width=0.069\textwidth]{./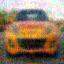}
    \includegraphics[width=0.069\textwidth]{./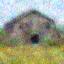}
    \includegraphics[width=0.069\textwidth]{./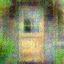}
    \includegraphics[width=0.069\textwidth]{./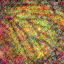}
    \includegraphics[width=0.069\textwidth]{./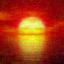}\\
    %WVT
    \raisebox{0.17in}{\rotatebox[origin=t]{90}{\scriptsize WVT}}
    \includegraphics[width=0.069\textwidth]{./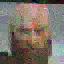}
    \includegraphics[width=0.069\textwidth]{./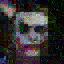}
    \includegraphics[width=0.069\textwidth]{./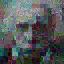}
    \includegraphics[width=0.069\textwidth]{./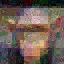}
    \includegraphics[width=0.069\textwidth]{./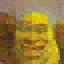}
    \includegraphics[width=0.069\textwidth]{./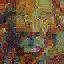}
\includegraphics[width=0.069\textwidth]{./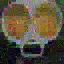}
\includegraphics[width=0.069\textwidth]{./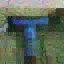}
\includegraphics[width=0.069\textwidth]{./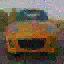}
\includegraphics[width=0.069\textwidth]{./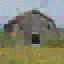}
\includegraphics[width=0.069\textwidth]{./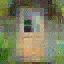}
\includegraphics[width=0.069\textwidth]{./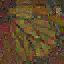}
\includegraphics[width=0.069\textwidth]{./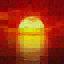}\\
    %DCGAN
    \raisebox{0.17in}{\rotatebox[origin=t]{90}{\scriptsize DCGAN}}
    \includegraphics[width=0.069\textwidth]{./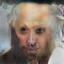}
    \includegraphics[width=0.069\textwidth]{./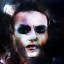}
    \includegraphics[width=0.069\textwidth]{./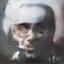}
    \includegraphics[width=0.069\textwidth]{./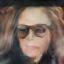}
    \includegraphics[width=0.069\textwidth]{./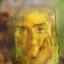}
    \includegraphics[width=0.069\textwidth]{./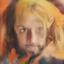}
    \includegraphics[width=0.069\textwidth]{./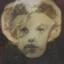}
    \includegraphics[width=0.069\textwidth]{./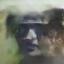}
    \includegraphics[width=0.069\textwidth]{./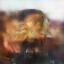}
    \includegraphics[width=0.069\textwidth]{./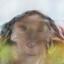}
    \includegraphics[width=0.069\textwidth]{./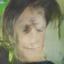}
    \includegraphics[width=0.069\textwidth]{./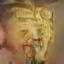}
    \includegraphics[width=0.069\textwidth]{./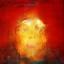}\\
    %Glow
    \raisebox{0.17in}{\rotatebox[origin=t]{90}{\scriptsize Glow}}
    \includegraphics[width=0.069\textwidth]{./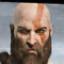}
    \includegraphics[width=0.069\textwidth]{./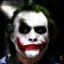}
    \includegraphics[width=0.069\textwidth]{./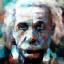}
    \includegraphics[width=0.069\textwidth]{./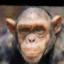}
    \includegraphics[width=0.069\textwidth]{./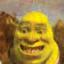}
    \includegraphics[width=0.069\textwidth]{./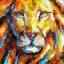}
    \includegraphics[width=0.069\textwidth]{./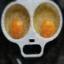}
    \includegraphics[width=0.069\textwidth]{./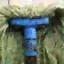}
    \includegraphics[width=0.069\textwidth]{./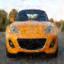}
    \includegraphics[width=0.069\textwidth]{./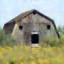}
    \includegraphics[width=0.069\textwidth]{./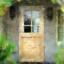}
    \includegraphics[width=0.069\textwidth]{./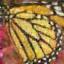}
    \includegraphics[width=0.069\textwidth]{./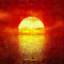}
    \caption{Compressive sensing of $64$px out-of-distribution images given $m = 2500$ ($\approx 20$\%) measurements at a noise level $\sqrt{\E \|\eta\|^2} = 0.1$. We provide a visual comparison of recoveries of the LASSO-DCT, LASSO-WVT, DCGAN, and Glow priors, where the DCGAN and Glow priors are trained on CelebA images. In each case, we choose values of the penalization parameter $\gamma$ to yield the best performance. We use $\gamma = 0$ for both the DCGAN and Glow priors, and optimize $\gamma$ for each recovery using the LASSO-WVT and LASSO-DCT priors. }
    \label{fig:OOD-2500}
\end{figure}

\newpage
\subsubsection{Compressive Sensing $128$px Sample Sheets}
We provide here additional sample sheets for our $128$px experiments in compressive sensing. For in-distribution images, we show in Figures \ref{fig:cs_indist_10000} and \ref{fig:cs_indist_2500} qualitative examples of the image reconstructions for the Glow prior, the PGGAN prior, the IA-PGGAN prior, and the Deep Decoder. We replicate the same experiments for out-of-distribution (FFHQ) images in Figures \ref{fig:cs_outdist_10000} and \ref{fig:cs_outdist_2500}.

% 128px sampmle sheet for compressive sensing on CelebA, 3%

% ---------------------- m = 10000 ------------------------------
\begin{figure}[h!]
    \centering
    \includegraphics[width=\textwidth]{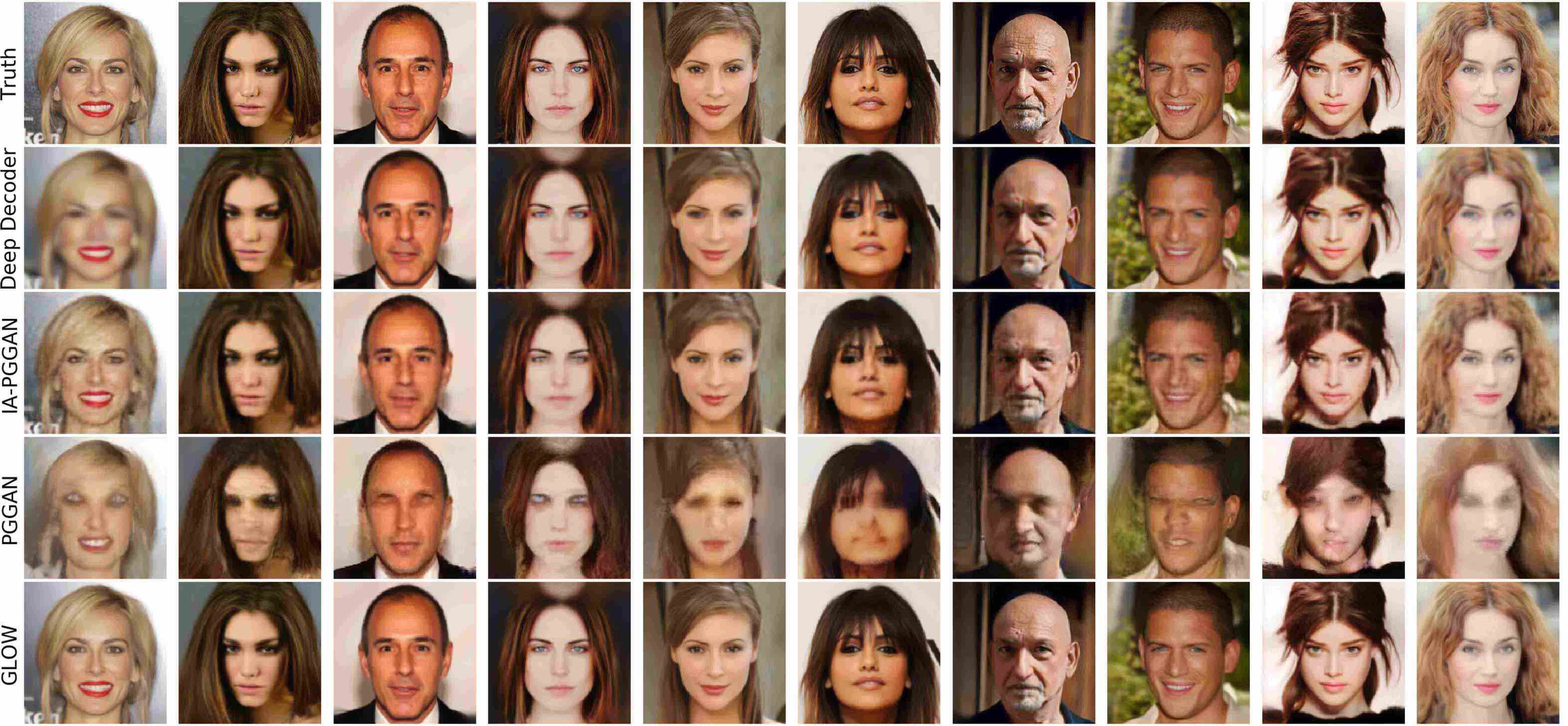}
    \caption{Compressive sensing visual comparisons --- Recoveries on a sample of in-distribution test set images with a number $m = 10000$ ($\approx 20\%$) of measurements under the Glow prior, the PGGAN prior, the IA-PGGAN prior, and the Deep Decoder.}
    \label{fig:cs_indist_10000}
\end{figure}

% ---------------------- m = 2500 ------------------------------
\begin{figure}[h!]
    \centering
    \includegraphics[width=\textwidth]{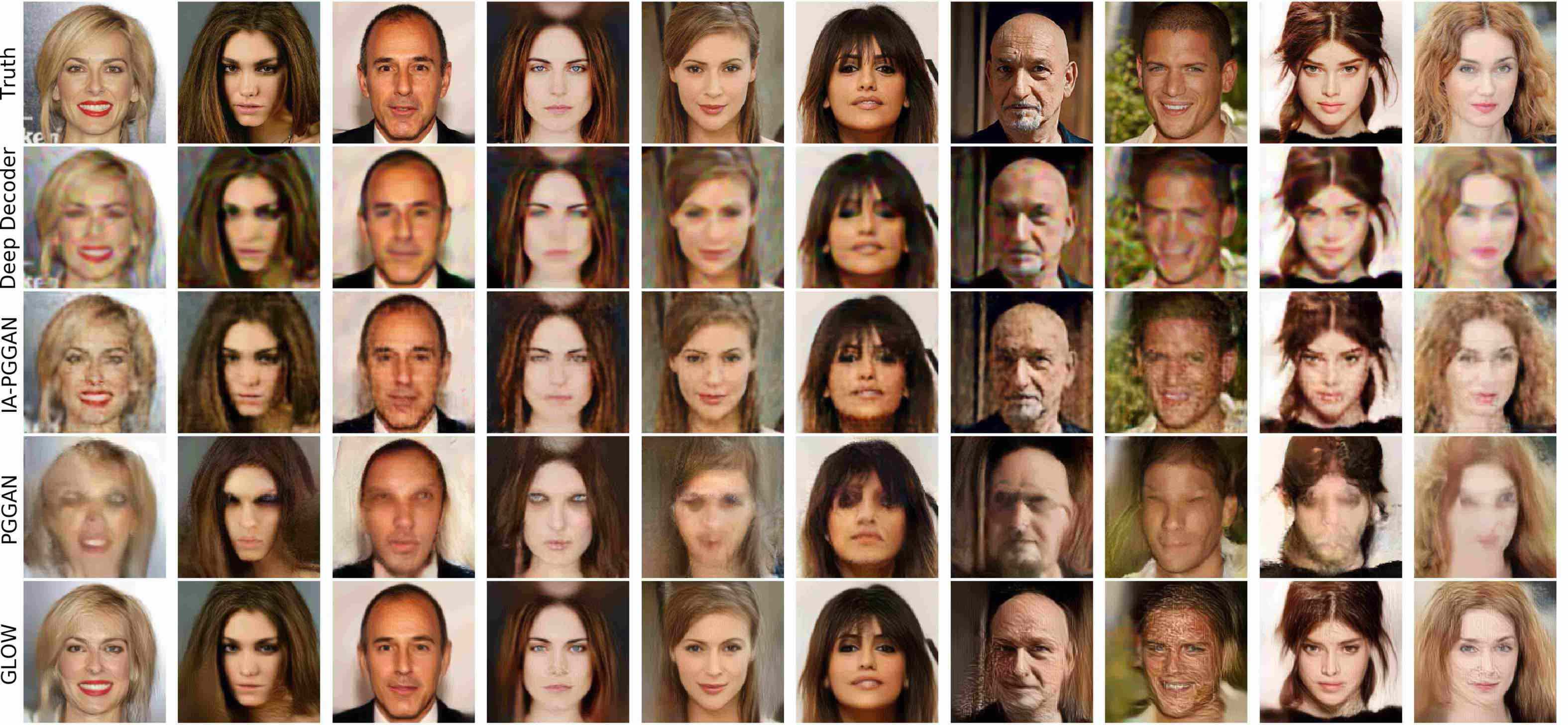}
    \caption{Compressive sensing visual comparisons --- Recoveries on a sample of in-distribution test set images with a number $m = 2500$ ($\approx 5\%$) of measurements under the Glow prior, the PGGAN prior, the IA-PGGAN prior, and the Deep Decoder.}
    \label{fig:cs_indist_2500}
\end{figure}

% ---------------------- m = 10000 ------------------------------
\begin{figure}[h!]
    \centering
    \includegraphics[width=\textwidth]{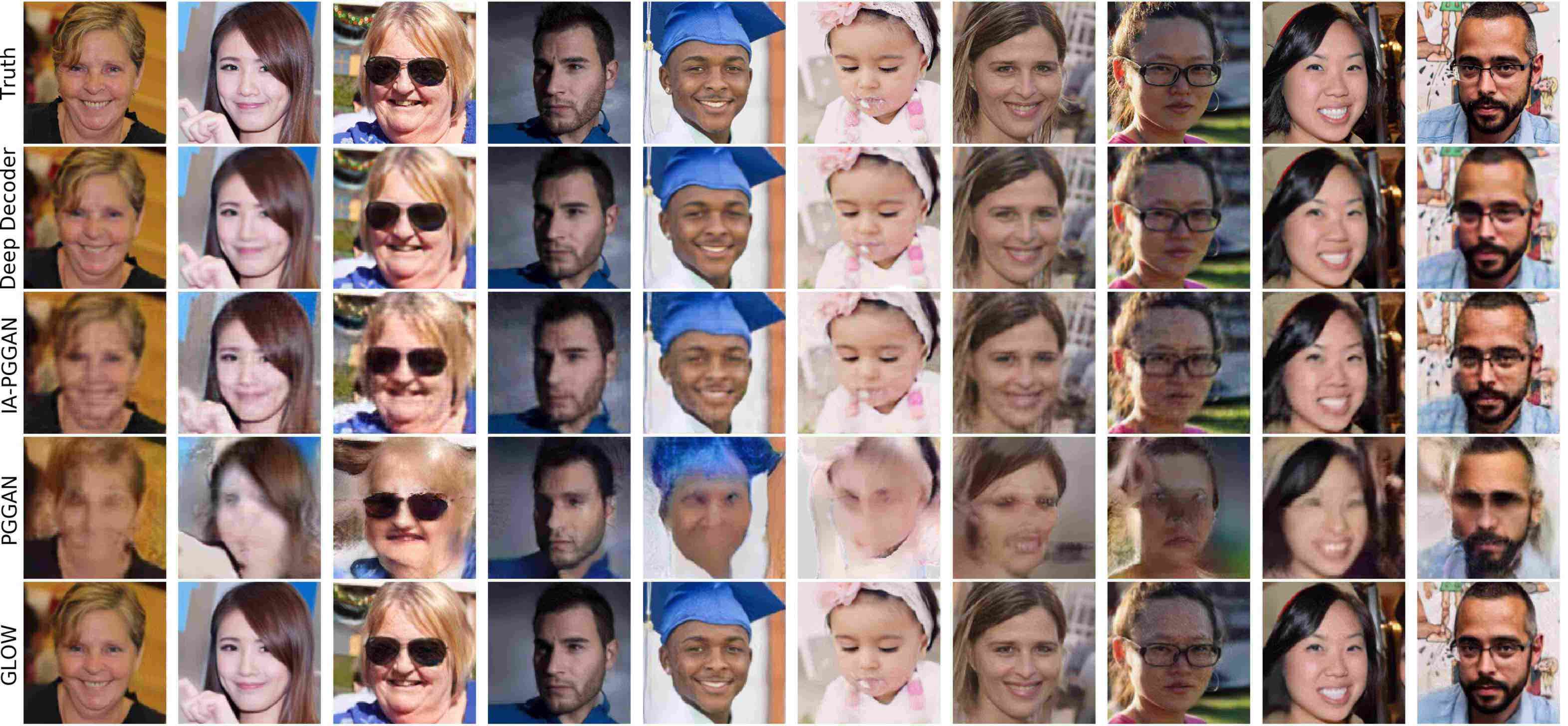}
    \caption{Compressive sensing visual comparisons --- Recoveries on a sample of out-of-distribution (FFHQ) test set images with a number $m = 10000$ ($\approx 20\%$) of measurements under the Glow prior, the PGGAN prior, the IA-PGGAN prior, and the Deep Decoder.}
    \label{fig:cs_outdist_10000}
\end{figure}

% ---------------------- m = 2500 ------------------------------
\begin{figure}[h!]
    \centering
    \includegraphics[width=\textwidth]{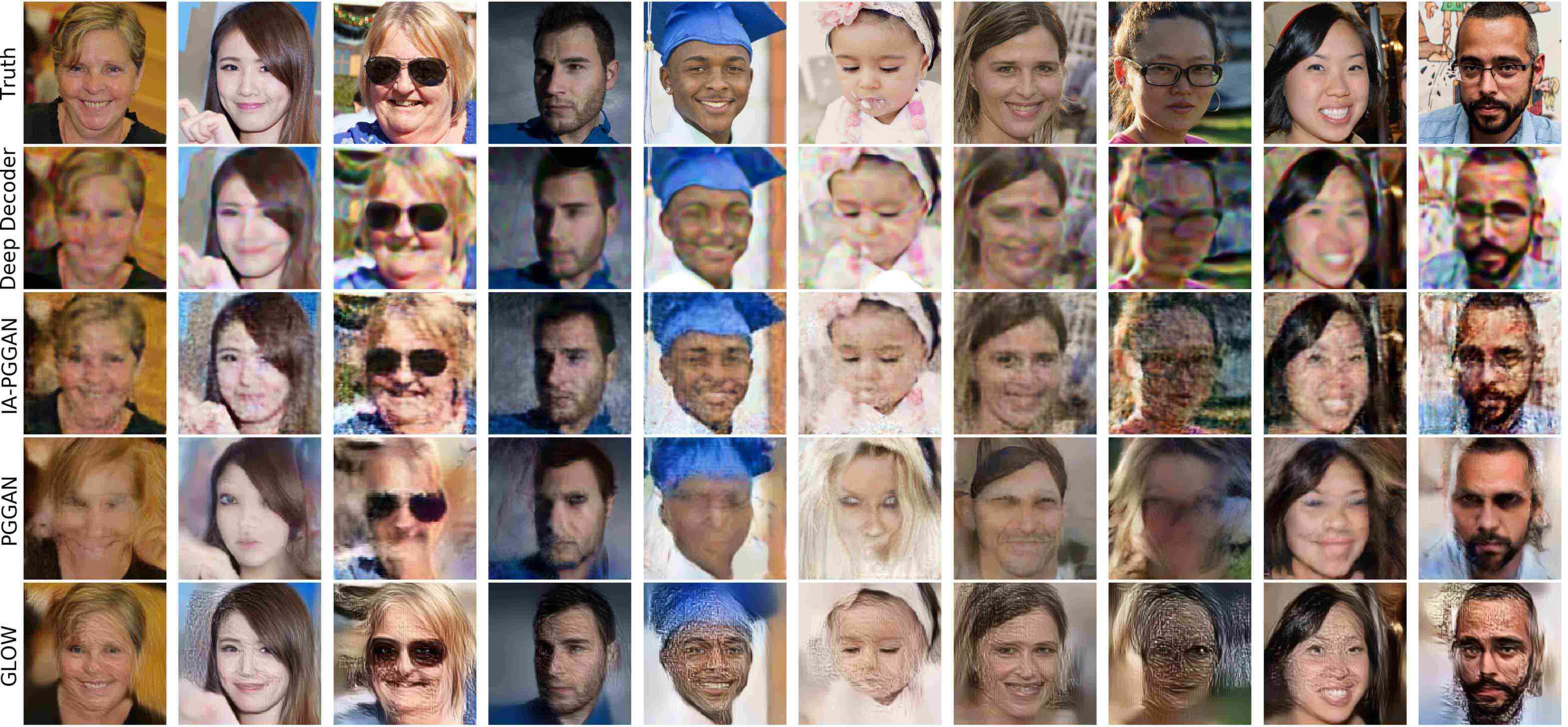}
    \caption{Compressive sensing visual comparisons --- Recoveries on a sample of out-of-distribution (FFHQ) test set images with a number $m = 2500$ ($\approx 5\%$) of measurements under the Glow prior, the PGGAN prior, the IA-PGGAN prior, and the Deep Decoder.}
    \label{fig:cs_outdist_2500}
\end{figure}

\clearpage

\newpage
\subsection{Inpainting}

We present here qualitative results on image inpainting under the DCGAN prior and the Glow prior on the CelebA test set. Compared to DCGAN, the reconstructions from Glow are of noticeably higher visual quality. 
% ------- INPAINTING RESULTS ---------
\begin{figure}[h!]
    \centering
    % TRUTH
    \raisebox{0.2in}{\rotatebox[origin=t]{90}{\scriptsize Truth}}
    \raisebox{0.2in}{\rotatebox[origin=t]{90}{\scriptsize n.a.}}
    \includegraphics[width=0.072\textwidth]{./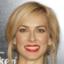}
    \includegraphics[width=0.072\textwidth]{./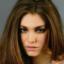}
    \includegraphics[width=0.072\textwidth]{./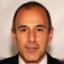}
    \includegraphics[width=0.072\textwidth]{./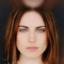}
    \includegraphics[width=0.072\textwidth]{./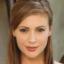}
    \includegraphics[width=0.072\textwidth]{./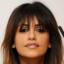}
    \includegraphics[width=0.072\textwidth]{./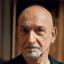}
    \includegraphics[width=0.072\textwidth]{./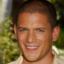}
    \includegraphics[width=0.072\textwidth]{./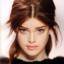}
    \includegraphics[width=0.072\textwidth]{./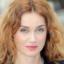}
    \includegraphics[width=0.072\textwidth]{./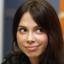}
    \includegraphics[width=0.072\textwidth]{./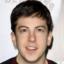}%\\[-0.1em]
    
    % MASKED IMAGES
    \raisebox{0.2in}{\rotatebox[origin=t]{90}{\scriptsize Masked}}
    \raisebox{0.2in}{\rotatebox[origin=t]{90}{\scriptsize n.a.}}
    \includegraphics[width=0.072\textwidth]{./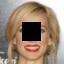}
    \includegraphics[width=0.072\textwidth]{./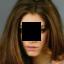}
    \includegraphics[width=0.072\textwidth]{./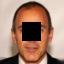}
    \includegraphics[width=0.072\textwidth]{./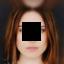}
    \includegraphics[width=0.072\textwidth]{./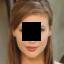}
    \includegraphics[width=0.072\textwidth]{./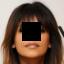}
    \includegraphics[width=0.072\textwidth]{./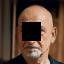}
    \includegraphics[width=0.072\textwidth]{./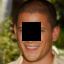}
    \includegraphics[width=0.072\textwidth]{./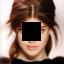}
    \includegraphics[width=0.072\textwidth]{./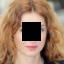}
    \includegraphics[width=0.072\textwidth]{./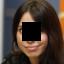}
    \includegraphics[width=0.072\textwidth]{./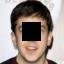}\\[-0.2em]
    
    % DCGAN
    \raisebox{0.2in}{\rotatebox[origin=t]{90}{\scriptsize DCGAN}}
    \raisebox{0.2in}{\rotatebox[origin=t]{90}{\scriptsize $\gamma = 0$}}\hspace{-0.3em}
    \includegraphics[width=0.072\textwidth]{./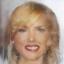}
    \includegraphics[width=0.072\textwidth]{./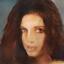}
    \includegraphics[width=0.072\textwidth]{./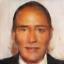}
    \includegraphics[width=0.072\textwidth]{./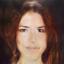}
    \includegraphics[width=0.072\textwidth]{./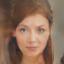}
    \includegraphics[width=0.072\textwidth]{./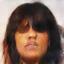}
    \includegraphics[width=0.072\textwidth]{./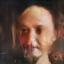}
    \includegraphics[width=0.072\textwidth]{./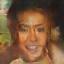}
    \includegraphics[width=0.072\textwidth]{./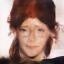}
    \includegraphics[width=0.072\textwidth]{./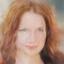}
    \includegraphics[width=0.072\textwidth]{./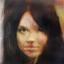}
    \includegraphics[width=0.072\textwidth]{./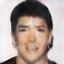}%\\[-0.1em]
    
    % Glow
    \raisebox{0.2in}{\rotatebox[origin=t]{90}{\scriptsize Glow}}
    \raisebox{0.2in}{\rotatebox[origin=t]{90}{\scriptsize $\gamma = 0$}}\hspace{-0.3em}
    \includegraphics[width=0.072\textwidth]{./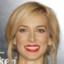}
    \includegraphics[width=0.072\textwidth]{./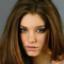}
    \includegraphics[width=0.072\textwidth]{./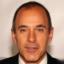}
    \includegraphics[width=0.072\textwidth]{./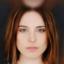}
    \includegraphics[width=0.072\textwidth]{./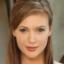}
    \includegraphics[width=0.072\textwidth]{./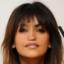}
    \includegraphics[width=0.072\textwidth]{./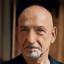}
    \includegraphics[width=0.072\textwidth]{./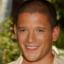}
    \includegraphics[width=0.072\textwidth]{./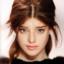}
    \includegraphics[width=0.072\textwidth]{./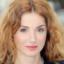}
    \includegraphics[width=0.072\textwidth]{./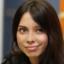}
    \includegraphics[width=0.072\textwidth]{./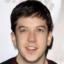}%\\[-0.1em]
    
    \caption{Image inpainiting results on a sample of the CelebA test set. Masked images are recovered using the DCGAN prior and the Glow prior. Recoveries under the DCGAN prior are skewed and blurred whereas the Glow prior leads to sharper and coherent inpainted images. For both Glow and DCGAN, we set $\gamma = 0$.}
\end{figure}

\newpage
\subsubsection{Image Inpainting on Out of Distribution Images}
We now perform image inpainiting under the Glow prior and the DCGAN prior, each trained on CelebA. Figure \ref{fig:inpainting-OOD-supp} shows the visuals of out-of-distribution inpainting.  As before, the DCGAN prior continues to suffer due to representation error and data bias while Glow achieves reasonable reconstructions on out-of-distribution images which are semantically similar to CelebA faces.  As one deviates to other natural images such as houses, doors, and butterfly wings,  the inpainting performance deteriorates. In compressive sensing, Glow performs much better on such arbitrarily out-of-distribution images as in this case, good recoveries only require the network only to assign a higher likelihood score to the true image compared to the all the candidate static images given by the null space of the measurement operator. 
%-------------------- Out Of Distribution for Image Inpainting -------------------
\begin{figure}[h!]
    \centering
    \raisebox{0.25in}{\rotatebox[origin=t]{90}{\scriptsize Truth}}
    %	\raisebox{0.25in}{\rotatebox[origin=t]{90}{\tiny N/A}}
    \includegraphics[width=0.092\textwidth]{./images/celeba/transfer_faces/original/001.jpg}
    \includegraphics[width=0.092\textwidth]{./images/celeba/transfer_faces/original/002.jpg}
    \includegraphics[width=0.092\textwidth]{./images/celeba/transfer_faces/original/004.jpg}
    \includegraphics[width=0.092\textwidth]{./images/celeba/transfer_faces/original/005.jpg}
    \includegraphics[width=0.092\textwidth]{./images/celeba/transfer_faces/original/006.jpg}
    \includegraphics[width=0.092\textwidth]{./images/celeba/transfer_faces/original/007.jpg}
    \includegraphics[width=0.092\textwidth]{./images/celeba/transfer_faces/original/014.jpg}
    \includegraphics[width=0.092\textwidth]{./images/celeba/transfer_faces/original/008.jpg}
    \includegraphics[width=0.092\textwidth]{./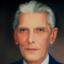}
    \includegraphics[width=0.092\textwidth]{./images/celeba/transfer_faces/original/011.jpg}\\
        % MASKED
    \raisebox{0.25in}{\rotatebox[origin=t]{90}{\scriptsize Masked}}
    %	\raisebox{0.17in}{\rotatebox[origin=t]{90}{\tiny m=10,000}}
    \includegraphics[width=0.092\textwidth]{./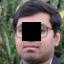}
    \includegraphics[width=0.092\textwidth]{./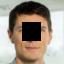}
    \includegraphics[width=0.092\textwidth]{./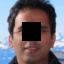}
    \includegraphics[width=0.092\textwidth]{./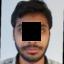}
    \includegraphics[width=0.092\textwidth]{./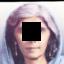}
    \includegraphics[width=0.092\textwidth]{./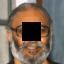}
    \includegraphics[width=0.092\textwidth]{./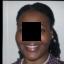}
    \includegraphics[width=0.092\textwidth]{./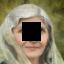}
    \includegraphics[width=0.092\textwidth]{./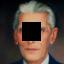}
    \includegraphics[width=0.092\textwidth]{./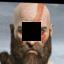}\\	
    % DCGAN 
    \raisebox{0.25in}{\rotatebox[origin=t]{90}{\scriptsize DCGAN}}
    %	\raisebox{0.17in}{\rotatebox[origin=t]{90}{\tiny m=10,000}}
    \includegraphics[width=0.092\textwidth]{./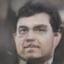}
    \includegraphics[width=0.092\textwidth]{./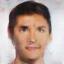}
    \includegraphics[width=0.092\textwidth]{./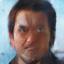}
    \includegraphics[width=0.092\textwidth]{./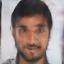}
    \includegraphics[width=0.092\textwidth]{./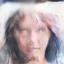}
    \includegraphics[width=0.092\textwidth]{./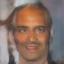}
    \includegraphics[width=0.092\textwidth]{./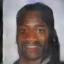}
    \includegraphics[width=0.092\textwidth]{./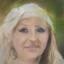}
    \includegraphics[width=0.092\textwidth]{./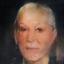}
    \includegraphics[width=0.092\textwidth]{./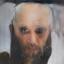}\\	% Glow 
    \raisebox{0.25in}{\rotatebox[origin=t]{90}{\scriptsize Glow}}
    %	\raisebox{0.17in}{\rotatebox[origin=t]{90}{\tiny m=10,000}}
    \includegraphics[width=0.092\textwidth]{./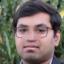}
    \includegraphics[width=0.092\textwidth]{./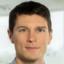}
    \includegraphics[width=0.092\textwidth]{./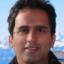}
    \includegraphics[width=0.092\textwidth]{./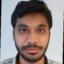}
    \includegraphics[width=0.092\textwidth]{./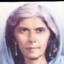}
    \includegraphics[width=0.092\textwidth]{./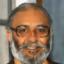}
    \includegraphics[width=0.092\textwidth]{./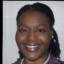}
    \includegraphics[width=0.092\textwidth]{./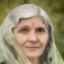}
    \includegraphics[width=0.092\textwidth]{./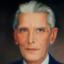}
    \includegraphics[width=0.092\textwidth]{./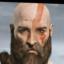}\\
    %Truth
    \raisebox{0.25in}{\rotatebox[origin=t]{90}{\scriptsize Truth}}
    \includegraphics[width=0.092\textwidth]{./images/celeba/transfer_faces/original/010.jpg}
    \includegraphics[width=0.092\textwidth]{./images/celeba/transfer_faces/original/012.jpg}
    \includegraphics[width=0.092\textwidth]{./images/celeba/transfer_faces/original/013.jpg}
    \includegraphics[width=0.092\textwidth]{./images/celeba/transfer_faces/original/015.jpg}
    \includegraphics[width=0.092\textwidth]{./images/celeba/transfer_faces/original/016.jpg}
    \includegraphics[width=0.092\textwidth]{./images/celeba/transfer_faces/original/017.jpg}
    \includegraphics[width=0.092\textwidth]{./images/celeba/transfer_faces/original/018.jpg}
    \includegraphics[width=0.092\textwidth]{./images/celeba/transfer_faces/original/023.jpg}
    \includegraphics[width=0.092\textwidth]{./images/celeba/transfer_faces/original/020.jpg}
    \includegraphics[width=0.092\textwidth]{./images/celeba/transfer_faces/original/021.jpg}\\
    %MASKED
    \raisebox{0.25in}{\rotatebox[origin=t]{90}{\scriptsize Masked}}
    \includegraphics[width=0.092\textwidth]{./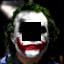}
    \includegraphics[width=0.092\textwidth]{./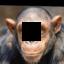}
    \includegraphics[width=0.092\textwidth]{./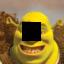}
    \includegraphics[width=0.092\textwidth]{./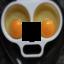}
    \includegraphics[width=0.092\textwidth]{./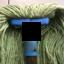}
    \includegraphics[width=0.092\textwidth]{./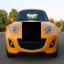}
    \includegraphics[width=0.092\textwidth]{./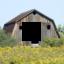}
    \includegraphics[width=0.092\textwidth]{./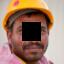}
    \includegraphics[width=0.092\textwidth]{./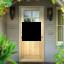}
    \includegraphics[width=0.092\textwidth]{./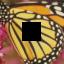}\\
    %DCGAN
    \raisebox{0.25in}{\rotatebox[origin=t]{90}{\scriptsize DCGAN}}
    \includegraphics[width=0.092\textwidth]{./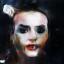}
    \includegraphics[width=0.092\textwidth]{./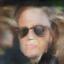}
    \includegraphics[width=0.092\textwidth]{./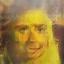}
    \includegraphics[width=0.092\textwidth]{./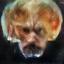}
    \includegraphics[width=0.092\textwidth]{./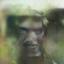}
    \includegraphics[width=0.092\textwidth]{./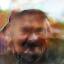}
    \includegraphics[width=0.092\textwidth]{./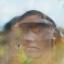}
    \includegraphics[width=0.092\textwidth]{./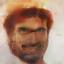}
    \includegraphics[width=0.092\textwidth]{./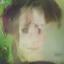}
    \includegraphics[width=0.092\textwidth]{./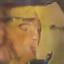}\\
    %Glow
    \raisebox{0.25in}{\rotatebox[origin=t]{90}{\scriptsize Glow}}
    \includegraphics[width=0.092\textwidth]{./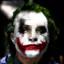}
    \includegraphics[width=0.092\textwidth]{./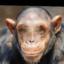}
    \includegraphics[width=0.092\textwidth]{./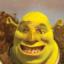}
    \includegraphics[width=0.092\textwidth]{./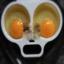}
    \includegraphics[width=0.092\textwidth]{./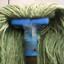}
    \includegraphics[width=0.092\textwidth]{./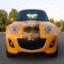}
    \includegraphics[width=0.092\textwidth]{./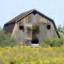}
    \includegraphics[width=0.092\textwidth]{./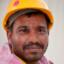}
    \includegraphics[width=0.092\textwidth]{./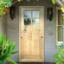}
    \includegraphics[width=0.092\textwidth]{./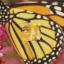}
    
        \caption{Image inpainiting results on out-of-distribution images. Masked images are recovered under the DCGAN prior and the Glow prior. Recoveries under the DCGAN prior are skewed and blurred whereas Glow prior leads to sharper and coherent inpainted images. For both Glow and DCGAN, we set $\gamma = 0$.}
    \label{fig:inpainting-OOD-supp}
\end{figure}

\newpage

\subsection{Discussion}
We provide evidence that random perturbations in image space induce larger changes in $z$ than comparable natural perturbations in image space.  Figure \ref{fig:perturbation-size-vs-direction} shows a plot of the norm of the change in image space, averaged over 100 test images, as a function of the size of a perturbation in latent space.  Natural directions are given by the interpolation  between the latent representation of two test images.  For the denoising problem, this difference in sensitivity indicates that the optimization algorithm might obtain a larger decrease in $\|z\|$ by an image modification that reduces unnatural image components than by a correspondingly large modification in a natural direction.

\begin{figure}[h!]
    \centering
    \includegraphics[width=0.4\textwidth]{./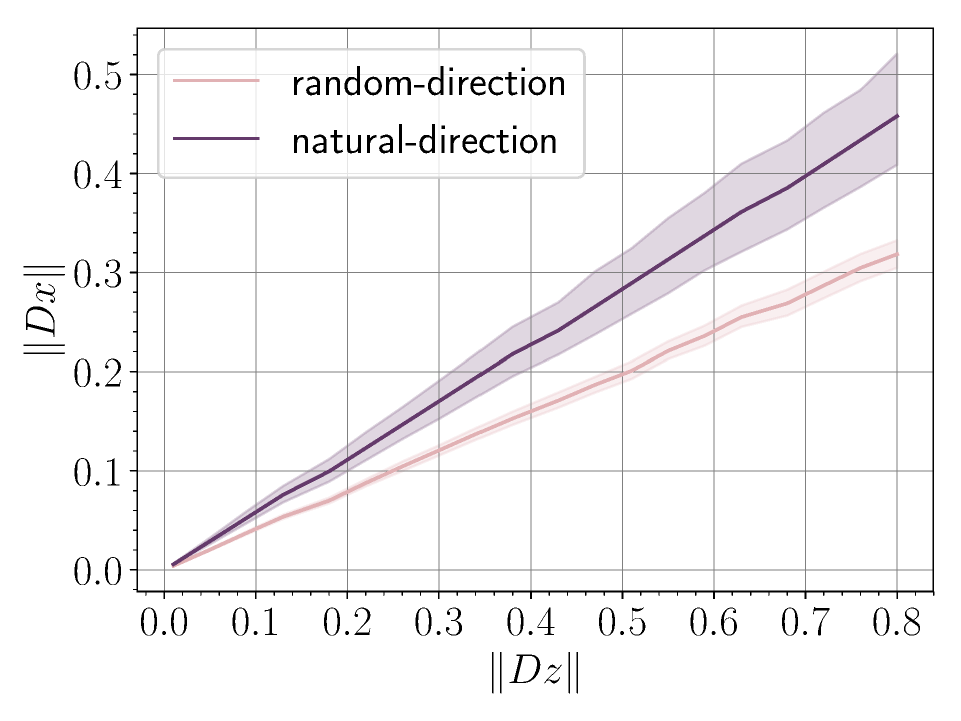}
    \caption{The magnitude of the change in image space  as a function of the size of a perturbation in latent space.  Solid lines are the mean behavior and shaded region depicts $95\%$ confidence interval.} 
    \label{fig:perturbation-size-vs-direction}
\end{figure} 

To further illustrate this point, we investigate in Figure \ref{fig:jacobian} the decay of the singular values of the Glow model's Jacobian for random points.
\begin{figure}
    \centering
    \includegraphics[width=0.6\textwidth]{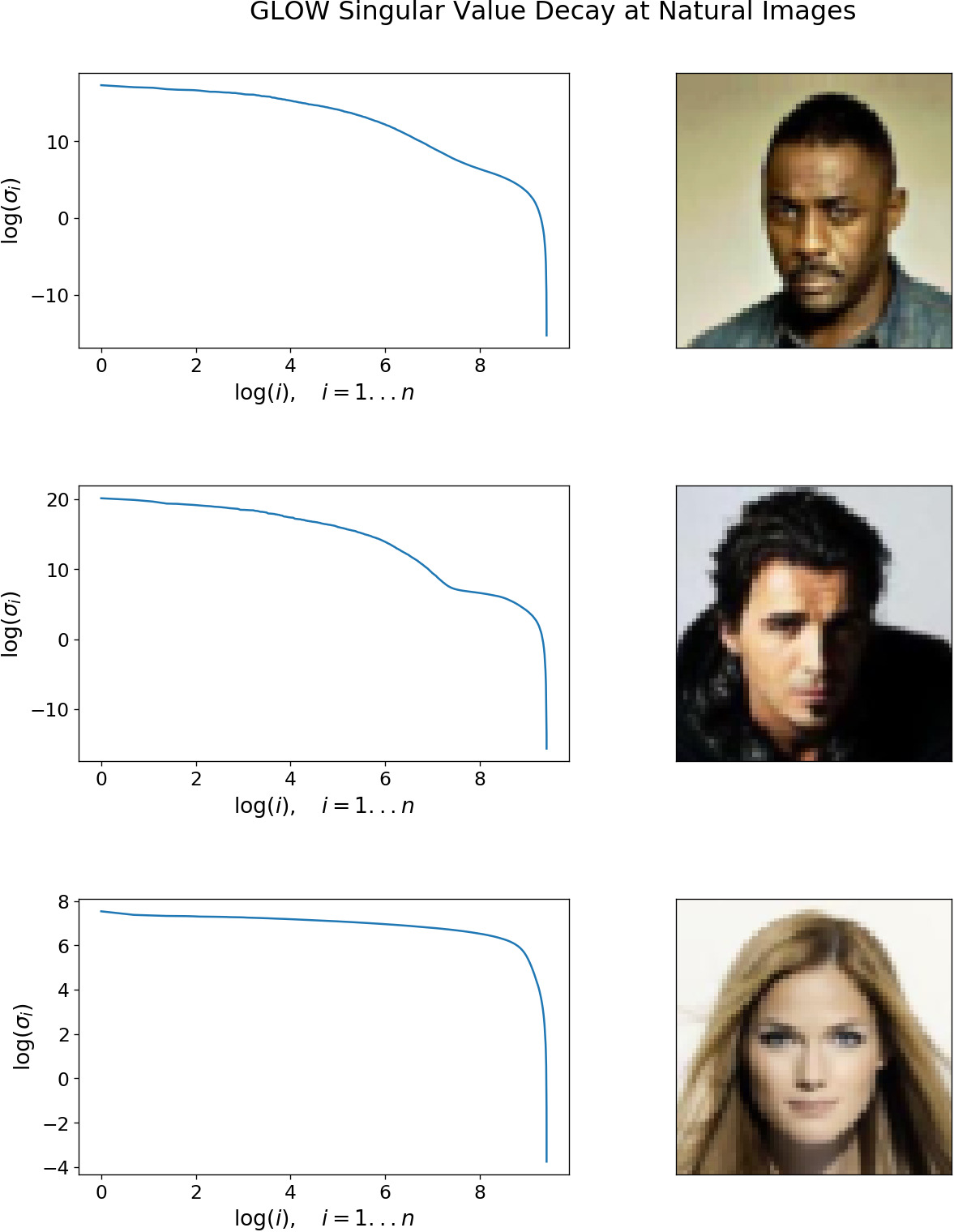}
    \caption{Log-log plots of the decay of the singular values of the trained Glow model's Jacobian for three random CelebA images.}
    \label{fig:jacobian}
\end{figure}

\clearpage
%\newpage
%\bibliography{References}

%\end{document}

\end{document}